\newtheorem{assumption}[]{Assumption}
\newtheorem{definition}{Definition}[]
\newtheorem{theorem}{Theorem}[]
\newtheorem{corollary}{Corollary}[]
\newtheorem{lemma}[]{Lemma}
\theoremstyle{remark}
\DeclareMathOperator*{\argmin}{arg\,min}
\DeclareMathOperator{\E}{\mathbb{E}}
\newcommand{\mathleft}{\@fleqntrue\@mathmargin0pt}
\newcommand{\mathcenter}{\@fleqnfalse}
\begin{document}
\title{\LARGE \textbf{
 FedNMUT---Federated Noisy Model Update Tracking Convergence Analysis}}
\author{
Vishnu Pandi Chellapandi, Antesh Upadhyay, Abolfazl Hashemi, and Stanislaw H. \.{Z}ak
\thanks{V. P. Chellapandi, A. Upadhyay, A. Hashemi, and S. H. \.{Z}ak are with Elmore Family School of Electrical and Computer Engineering, Purdue University, West Lafayette, 47907, USA. Emails: {\tt\small \{cvp,aantesh,abolfazl,zak\}@purdue.edu}}
}

\maketitle
\begin{abstract}
A novel Decentralized Noisy Model Update Tracking Federated Learning algorithm (FedNMUT) is proposed that is tailored to function efficiently in the presence of noisy communication channels that reflect imperfect information exchange. This algorithm uses gradient tracking to minimize the impact of data heterogeneity while minimizing communication overhead. The proposed algorithm incorporates noise into its parameters to mimic the conditions of noisy communication channels, thereby enabling consensus among clients through a communication graph topology in such challenging environments. FedNMUT prioritizes parameter sharing and noise incorporation to increase the resilience of decentralized learning systems against noisy communications. Theoretical results for the smooth non-convex objective function are provided by us, and it is shown that the $\epsilon-$stationary solution is achieved by our algorithm at the rate of $\mathcal{O}\left(\frac{1}{\sqrt{T}}\right)$, where $T$ is the total number of communication rounds. Additionally, via empirical validation, we demonstrated that the performance of FedNMUT is superior to the existing state-of-the-art methods and conventional parameter-mixing approaches in dealing with imperfect information sharing. This proves the capability of the proposed algorithm to counteract the negative effects of communication noise in a decentralized learning framework. 
\end{abstract}

\section{Introduction}
\subsection{Motivation}
In the real world, vast amounts of data are generated from various sources such as computers, mobile devices, smartwatches, and vehicles. These data are aggregated in centralized data centers for the purpose of training machine learning models. However, this centralized approach encounters significant hurdles, including limited communication bandwidth and privacy concerns, rendering it unreliable and non-scalable. Consequently, there is a pressing need to establish learning paradigms that not only ensure data security but also uphold privacy standards. This necessity has spurred the development of decentralized optimization algorithms, exemplified by Decentralized Federated Learning (DFL) and Federated Learning (FL). These methodologies have been increasingly applied across diverse sectors, including smart cities and connected vehicles \cite{nedic2009distributed,koloskova2020unified,hashemi2021benefits,mcmahan2017communication,konevcny2016federated,chellapandi2023survey,pandya2023federated,chellapandi2023federated,yuan2023fedmfs}. In this paper, we propose a novel Decentralized Noisy Model Update Tracking Federated Learning algorithm (FedNMUT) that is tailored to function efficiently in the presence of noisy communication channels that reflect imperfect information exchange.

\subsection{Literature Overview}
Decentralized optimization typically employs consensus-based gradient descent methods, where the computed parameters are shared among clients \cite{nedic2018network,nedic2009distributed,tsitsiklis1984problems,chong2023introduction}. These clients independently calculate local weights and gradients based on their data, subsequently exchanging these parameters with others. The aggregation of these parameters, influenced by the network's topology which governs the communication framework, is pivotal in this learning paradigm. This network topology is often depicted as a simple graph, illustrating the communication pathways among clients. Decentralized approaches aim to alleviate the limitations inherent in centralized systems, such as communication delays and bandwidth constraints, thereby enhancing scalability and efficiency in extensive networks.

Recent advancements like Gradient Tracking (GT) and Momentum Tracking (MT) have been devised to tackle the challenge of data heterogeneity in decentralized settings, albeit at the cost of increased communication overhead \cite{di2016next,pu2021distributed,lin2021quasi,takezawa2022momentum}. Conversely, the Quasi-Global Momentum (QGM) strategy achieves global momentum synchronization without necessitating additional communication, thus mitigating decentralized learning challenges related to diverse data sets. Furthermore, RelaySGD, a novel method, replaces traditional gossip averaging with Relay-Sum, offering a unique approach to the averaging process. The integration of RelaySGD with existing methods can significantly improve performance. The QG-DSGDm algorithm, which combines QGM with DSGDm, marks a notable advancement in this domain. A recent proposition, the Global Update Tracking (GUT) method, addresses data heterogeneity in decentralized learning efficiently, without incurring extra communication costs. Empirical results validate that GUT not only accommodates variances in data distribution across devices but also bolsters the performance of decentralized learning processes \cite{aketi2024global}.

While enhancing communication efficiency remains a crucial challenge in distributed learning, efforts like communication compression have been explored \cite{du2020high,zheng2020design,hashemi2021benefits,chen2021communication,chen2021decentralized,li2022detection}. However, these initiatives often presume ideal, noise-free communication channels. The robustness and dependability of machine learning frameworks, especially in the burgeoning fields reliant on distributed learning, hinge on their performance amidst noisy communications.

Imperfect information exchange, such as noisy or quantized communication, has been examined in the context of average consensus algorithms within distributed frameworks. Yet, the ramifications of varying noise levels remain underexplored. Moreover, existing research, primarily focused on consensus issues, does not fully address the complex challenges encountered in contemporary decentralized optimization and learning paradigms \cite{carli2007average,qin2021communication}. In contrast to Federated Learning (FL), where server assistance is common, Decentralized Federated Learning (DFL) operates without a central server, with each client acting autonomously, processing local Stochastic Gradient Descent (SGD) or its variants on its data and interacting directly with neighboring clients.

In our previous paper~\cite{chellapandi2023convergence}, we performed a comparative study of three proposed algorithms for DFL under imperfect communication conditions, typified by noisy channels. These algorithms—FedNDL1, FedNDL2, and FedNDL3—differ in their handling of noise and parameter sharing, demonstrating varying degrees of resilience to communication noise. In this paper, we propose a novel algorithm that employs the Gradient Tracking method in DFL and compare its performance against the previously mentioned algorithms.

\subsection{Paper's Contributions}

This paper introduces a novel algorithm that employs the Gradient Tracking method in DFL, considering the impact of communication noise. Previous studies have evaluated the effectiveness of two-time scale methods in DFL with noisy channels. However, these investigations were limited by inflexible assumptions such as strong convexity in papers such as \cite{reisizadeh2019exact, vasconcelos2021improved, reisizadeh2022distributed, reisizadeh2022almost}. These assumptions are rarely satisfied in practical and large-scale learning scenarios, which limits the applicability of the proposed methods. The new algorithm presented in this paper addresses these limitations by using a more flexible optimization framework that can handle smooth non-convex objective functions. We conducted experiments using the proposed algorithm in practical distributed learning scenarios, under the presence of noise. The results have shown that the algorithm is capable of reducing overall loss and mitigating consensus error, despite the presence of noise.

We evaluate the performance of decentralized FL using the Federated Model Update Tracking algorithm, which incorporates noise into tracking parameter transmissions to clients or servers. Unlike prior models where communication noise was not considered, our approach integrates noise post-local SGD updates, facilitating parameter exchange via a gossip/mixing matrix and updating the global parameters thereafter. We also compare the noise resilience of our algorithm with previously developed algorithms, providing both theoretical and empirical evidence of its robustness against communication noise~\cite{chellapandi2023convergence}.We show that the algorithm proposed in this paper handles the communication noise better than our previously proposed FedNDL3 algorithm.

\section{Problem Statement, Algorithm, and Assumptions}
\subsection{Problem Statement}
In this section, we define the framework, the underlying assumptions, and the methodologies evaluated in this study. The discussion begins with a typical DFL configuration, wherein \(n\) clients possess distinct local datasets and perform consensus-based learning to obtain the global parameters. The problem is mathematically formulated as follows:
\begin{equation}
    \min_{x \in \mathbb{R}^{d}}\left[f(x) = \frac{1}{n}\sum_{i=1}^{n} f_i(x_i) \right],
\end{equation}
where each \(f_i:\mathbb{R}^{d} \rightarrow \mathbb{R}\), for \(i\) ranging from 1 to \(n\), signifies the local objective function for the \(i^{th}\) client node. The stochastic expression of the local objective function is presented as:
\begin{equation}
    f_i(x_i) = \mathbb{E}_{\xi_i \sim \mathcal{D}_{i}}[F_i(x_i,\xi_i)],
\end{equation}
in which \(\xi_i\) represents the sample data drawn from the data distribution \(\mathcal{D}_{i}\) specific to the \(i^{th}\) client. Here, \(F_i(x_i,\xi_i)\) denotes the loss function calculated for each client and their respective data sample \(\xi_i\). The notation \(x_i \in \mathbb{R}^d\) refers to the parameter vector for client \(i\), while \(X \in \mathbb{R}^{d\times n}\) is the matrix constituted by these parameter vectors. The fundamental goal for the clients is to collaboratively attain a state of optimality, denoted as \(x_i = x^{*} = \argmin_{x \in \mathbb{R}^d} f(x)\), which represents the global minimum.

\begin{definition}[\textbf{Mixing matrix}]
    The mixing matrix, $W = [w_{ij}] \in [0,1]^{n\times n}$, is a non-negative, symmetric, that is, $W = W^{\top}$, and doubly stochastic matrix, that is, $W\mathds{1} = \mathds{1}, \mathds{1}^{\top}W = \mathds{1}^{\top}$, where $\mathds{1}$ is the column vector of size $n$ whose elements are ones.
\end{definition}

\textbf{\textit{FedNDL1~\cite{chellapandi2023convergence}: }}In FedNDL1, the model updates are performed by each client in parallel and then each client then communicates the parameters to the neighbors. The communication is topology dependent. The neighboring client receives a noisy version of the parameters due to the imperfect communication channel,

\begin{equation}
    \label{eq:consensus-step}
    x_i^{(t+1)} \ = \ \sum_{j=1}^n w_{ij} \ ( x_j^{(t+\frac{1}{2})} + \delta_j^{(t)} ),
\end{equation}
where $x_j^{(t+\frac{1}{2})}$ is the vector of parameters sent by client $j$ and $\delta_j^{(t)} \in \mathbb{R}^d$, is a zero mean random noise. We assume the noise to have a zero mean, the noise variance is
\begin{equation*}
    \label{eq:noise-var}
    D^2_{t,j} = \E[\|\delta_j^{(t)}\|^2].
\end{equation*}

\textit{\textbf{FedNDL2~\cite{chellapandi2023convergence}:} }
Similar to the previous algorithm, this algorithm also performs a two-stage process. However, In FedNDL2, the consensus step is performed 
before computing the individual gradients and parameters,
\begin{equation}
    \label{eq:p2-gossip}
    x_i^{(t+\frac{1}{2})} =  \sum_{j=1}^{n} w_{ij} ( x_j^{(t)} + \delta_j^{(t)} ),
\end{equation}

\textbf{\textit{FedNDL3~\cite{chellapandi2023convergence}:}} In FedNDL3, 
 the clients share their gradients over a noisy communication channel instead of the weights followed by the SGD update. This idea comes from our Noisy-FL motivation and the fact that SGD is inherently a noisy process. So, pursuing this scenario gives more flexibility to handle the noise as a part of the SGD process. The entire formulation for this algorithm can be written as, 
\begin{equation}
    \label{eq:p3-gradient-based}
    x_i^{(t+1)} \ = \ x_i^{(t)} - \eta_t \sum_{j=1}^{n} w_{ij} \ ( g_j^{(t)} + \delta_j^{(t)} ),
\end{equation}
where $g_j^{(t)}$ refers to the gradient of client $j$ at iteration $t$

\captionsetup{font=normal}
\captionsetup[sub]{font=normal}
\begin{figure*}[!t]
\vspace{0.1in}
\begin{subfigure}{0.33\textwidth}
    \centering    \includegraphics[width=\textwidth]{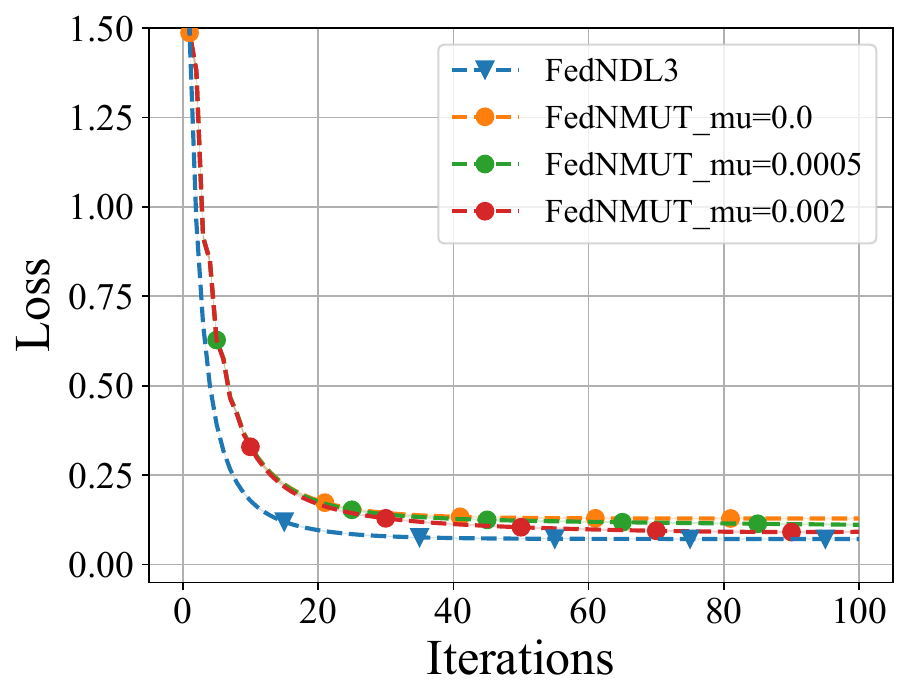}
    \caption{Fully-connected topology}
    \label{fig:syn_1_0}
\end{subfigure}
\begin{subfigure}{0.33\textwidth}
    \centering
    \includegraphics[width=\textwidth]{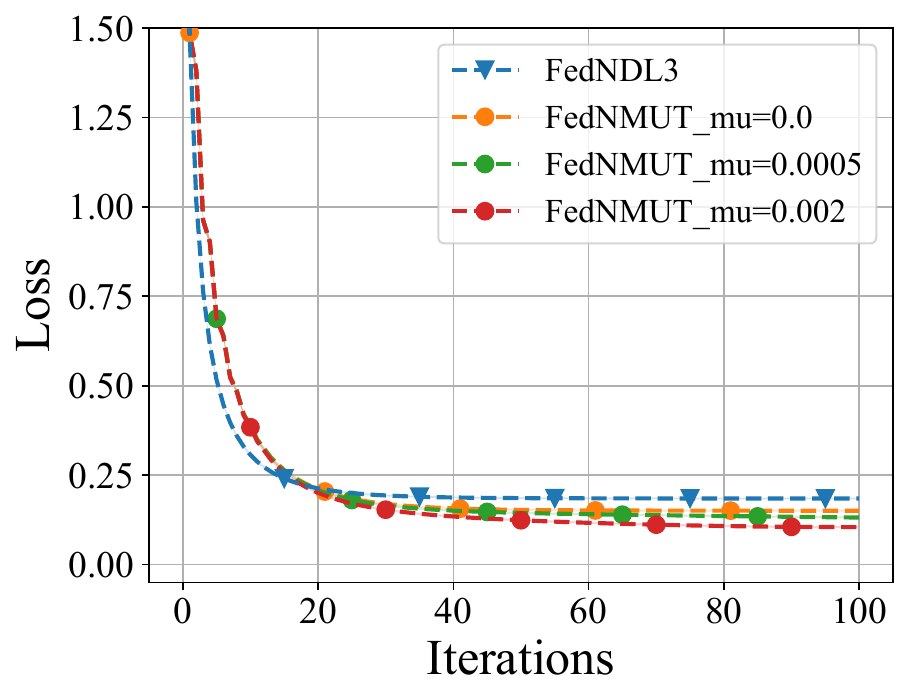}
    \caption{Torus topology}
    \label{fig:syn_1_0.005}
\end{subfigure}
\begin{subfigure}{0.33\textwidth}
    \centering
    \includegraphics[width=\textwidth]{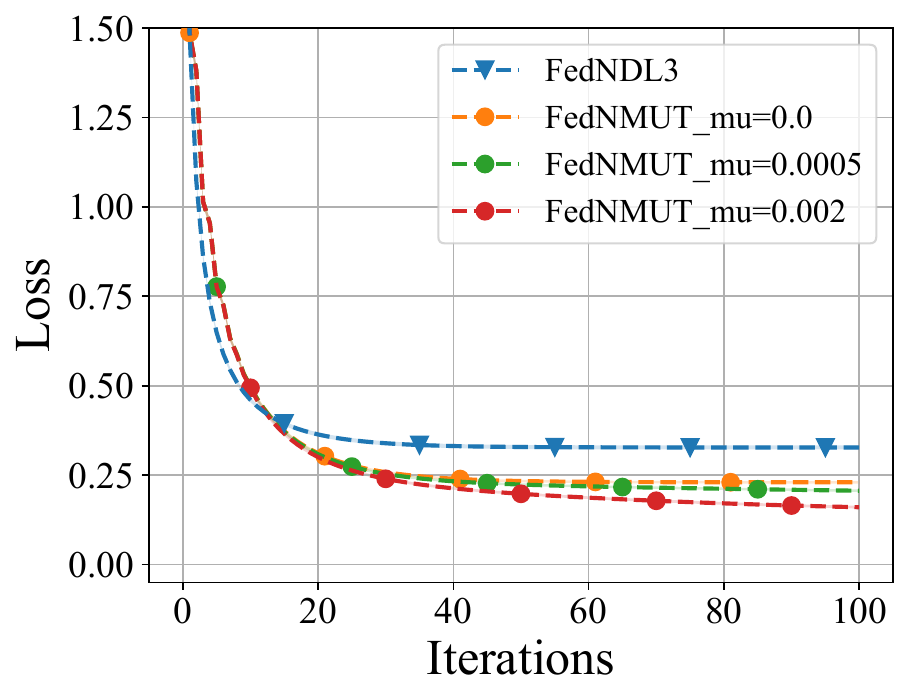}
    \caption{Ring topology} 
    \label{fig:syn_1_0.001}
\end{subfigure} 
\caption{Loss versus iterations for various $\mu$ values for different communication topologies (No noise scenario).}
\label{fig:loss_all_mu}
\vspace{-4mm}
\end{figure*}

\begin{figure*}[!t]
\begin{subfigure}{0.33\textwidth}
    \centering
    \includegraphics[width=\textwidth]{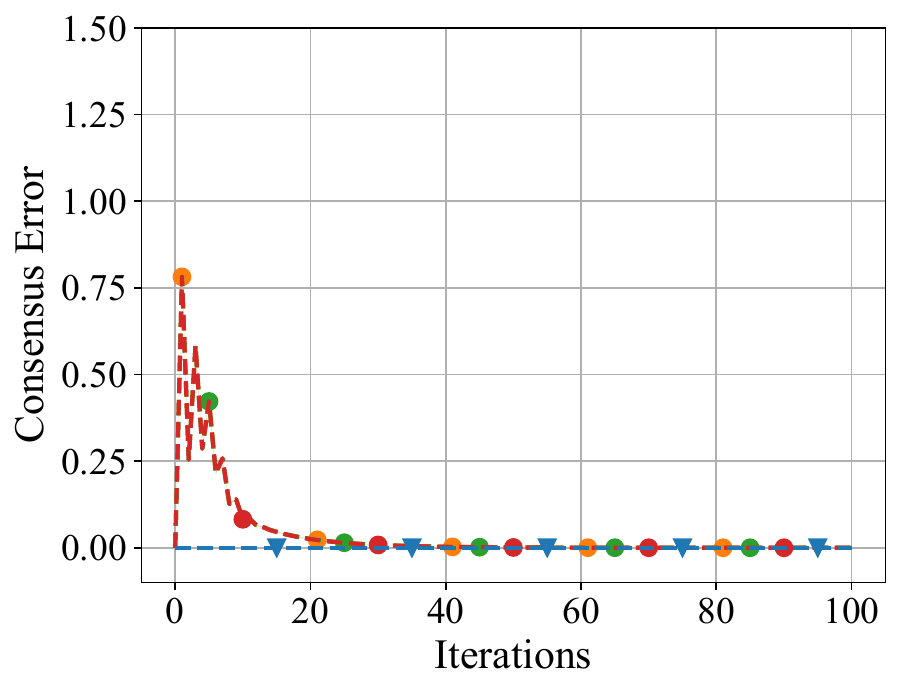}
    \caption{Fully-connected topology} 
    \label{fig:syn_1_0.005_full_cons}
\end{subfigure}
\begin{subfigure}{0.33\textwidth}
    \centering
    \includegraphics[width=\textwidth]{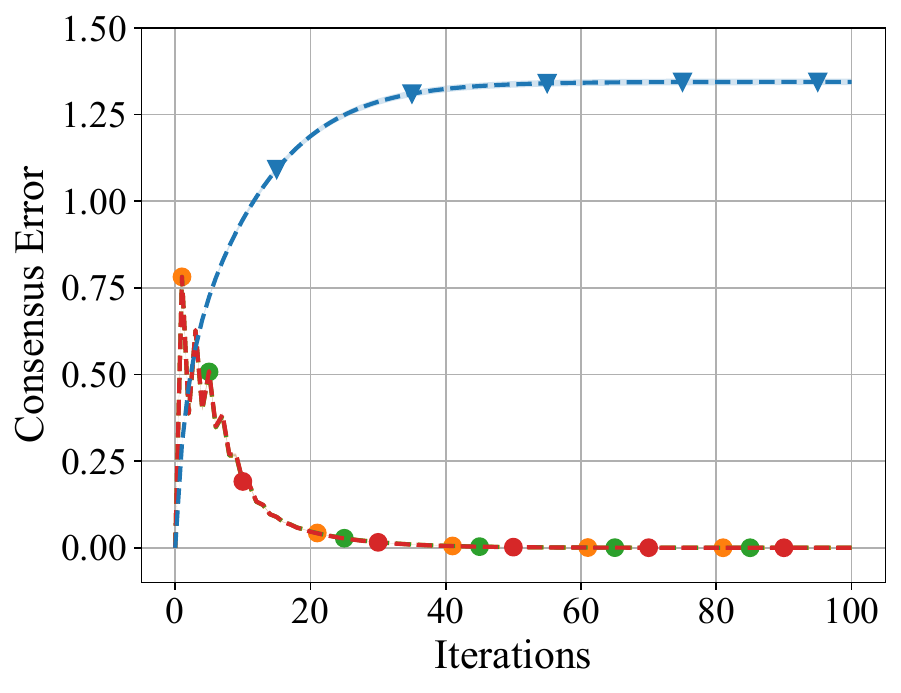}
    \caption{Torus topology}
    \label{fig:syn_1_0.005_torus_cons}
\end{subfigure}
\begin{subfigure}{0.33\textwidth}
    \centering
    \includegraphics[width=\textwidth]{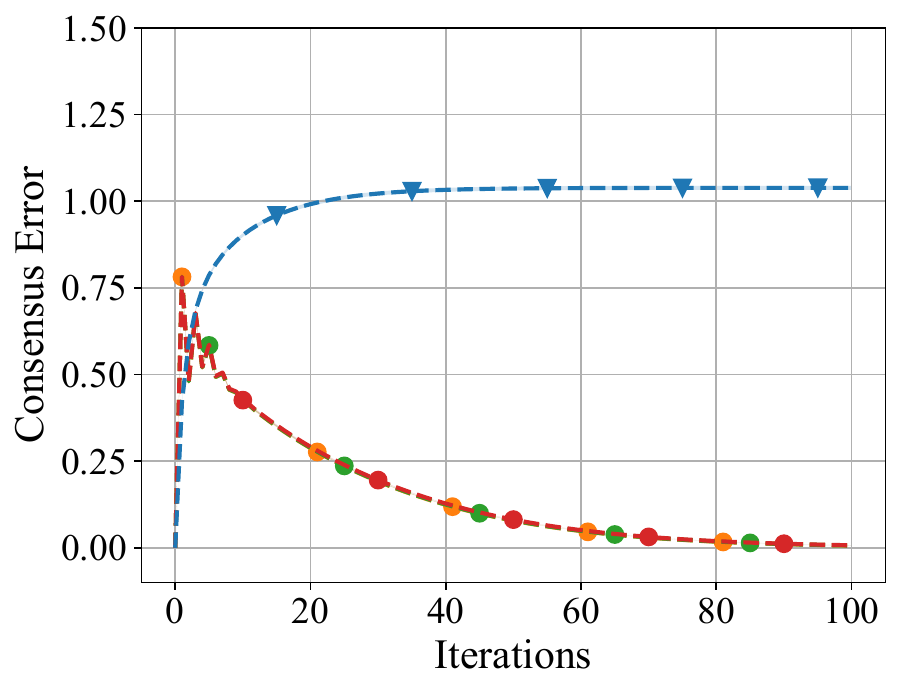}
    \caption{Ring topology} 
    \label{fig:syn_1_0.005_ring_cons}
\end{subfigure}
\caption{Consensus error versus iterationsfor various $\mu$ values for different communication topologies (No noise scenario).}
\label{fig:consensus_all_mu}
\vspace{-4mm}
\end{figure*}

\textbf{FedNMUT:} We next describe our proposed algorithm studied in this paper. The global update tracking algorithm aims to gain the advantages of gradient tracking while eliminating communication overhead. Instead of exchanging gradients, clients communicate model updates to their neighbors.

In the Global Update Tracking (GUT) approach, the model updates, $x_i^t-x_i^{t-1}$, are transferred instead of the gradients $g_i^t$. Thus, this approach involves each client $i$ transmitting its model updates to its neighbors, thereby avoiding the direct communication of model gradients. Each client maintains a record of its neighbor's model states as $\hat{x}_j$, updating this information with incoming model updates to maintain the latest state information, as outlined in line~10 of the algorithm.

In this algorithm, the variable $\Delta_i^t$, is calculated for each client $i$, which aggregates the local gradient update $g_i^t$ and the gossip averaging update $\sum_j (w_{ij}-I_{ij})\hat{x}_j^t$, as shown in line~5 of the Algorithm~\ref{alg:FedNMUT}. The tracking variable $y_i^t$ is calculated as per line~6 in the algorithm, factoring in both the local gradient and the gossip averaging updates represented by $\Delta_i^t$. The gossip component of the update for each client $i$, expressed as $ \sum_j w_{ij}(\hat{x}_j^t-x_i^t)$, is based on the client's own model parameters. To integrate this in the calculation of the tracking variable $y_i^t$, the information from neighboring clients ($y_j^t$) must be adjusted to reflect the client's own reference frame, resulting in an added term $\frac{1}{\eta}(\hat{x}_j^t-x_i^t)$ in the update formula mentioned in line~6 of the algorithm. This term can be consolidated and viewed as an bias term, $b^t$ as shown in equation~\eqref{eq:bias_term} below. Additionally, to optimize the effectiveness of this method, the correction factor for the tracking variable is scaled by $\mu$, a hyperparameter that can be adjusted to maximize the algorithm's performance. 

The term denoted $\delta^{(t)}_i$ represents the communication noise that is added to the tracking variable $y_i^t$ shown in line~7 of the algorithm. The noisy tracking variable is then transmitted to neighboring clients as shown in lines~8--10 of the algorithm. Setting the hyperparameter $\mu$ allows us to replicate the FedNDL3 algorithm update as per equation~\eqref{eq:p3-gradient-based}.

We summarize the above described algorithms in a compact way as Algorithm~\cref{alg:all-noisy-DFL}.

\definecolor{light-gray}{gray}{0.85}
\begin{algorithm}[h]
    \caption{\textcolor{darkgray}{FedNDL1},  \textcolor{blue}{FedNDL2}, and \textcolor{purple}{FedNDL3}}
    \label{alg:all-noisy-DFL}
    \begin{algorithmic}[1]
       \STATE {\bfseries Input:} For each node $i$ initialize: $x_i^{(0)} \  \in \ \mathbb{R}^d$, step size $ \{ \eta_t \}_{t=0}^{T-1} $, mixing matrix $W$, noise from the communication channel $\delta^{(t)}$\\
       \FOR{$t = 0, \dots, T$}
       \STATE \textcolor{darkgray}{ \textbf{FedNDL1:} } 
       \STATE \textcolor{darkgray}{ {Run in parallel for each client $i$}}
       \STATE  \textcolor{darkgray}{ Sample $\xi_{i}^{(t)} \mbox{, compute } g_i^{(t)} = \widetilde{\nabla} f_i (x_i^{(t)},\xi_i^{(t)})$}
       \STATE  \textcolor{darkgray}{ $ x_i^{(t+\frac{1}{2})} \ = \  x_i^{(t)} - \eta_t g_i^{(t)} $\ }
       \STATE \textcolor{darkgray}{ $ x_i^{(t+1)} \ = \ \sum_{j=1}^{n} w_{ij} \ ( x_j^{(t+\frac{1}{2})} + \delta_j^{(t)} ) $\ }              
       \STATE \textcolor{blue}{ \textbf{FedNDL2:} } 
      \STATE \textcolor{blue}{ $ x_i^{(t+\frac{1}{2})} \ = \ \sum_{j=1}^{n} w_{ij} \ ( x_j^{(t)} + \delta_j^{(t)} ) \  $ }
       \STATE \textcolor{blue}{ {Run in parallel for each clients $i$}}
       \STATE \textcolor{blue}{ Sample $\xi_{i}^{(t)} \mbox{ ,  } g_i^{(t+\frac{1}{2})}= \widetilde{\nabla} f_i (x_i^{(t+\frac{1}{2})},\xi_i^{(t)})$ \ }
       \STATE \textcolor{blue}{ $ x_i^{(t+1)} \ = \  x_i^{(t+\frac{1}{2})} - \eta_t g_i^{(t+\frac{1}{2})}  $ }

       \STATE \textcolor{purple}{ \textbf{FedNDL3:} }  
       \STATE \textcolor{purple}{ Run in parallel for each client $i$ }
       \STATE \textcolor{purple}{ Sample $\xi_{i}^{(t)} \mbox{, compute } g_i^{(t)} = \widetilde{\nabla} f_i (x_i^{(t)},\xi_i^{(t)})$}
       \STATE \textcolor{purple}{ $ x_i^{(t+1)} \ = \ x_i^{(t)} - \eta_t \sum_{j=1}^{n} w_{ij} \ ( g_j^{(t)} + \delta_j^{(t)} ) \ $ }
       \ENDFOR
    \end{algorithmic}
\end{algorithm}

The FedNMUT algorithm can be formulated as

\begin{equation}
    \label{eq:FedNMUT-based}
x_i^{(t+1)} = x_i^{(t)} - \eta_t\widetilde{ y_i}^t  = x_i^{(t)} - \eta_t ({ y_i}^t + \delta_i^{t}),  \\
\end{equation}
where

\begin{equation}    
\label{eq:bias_term}
y_i^{t}= \Delta_i^t + \mu \underbrace{\Big[\sum\limits_{j\in \mathcal{N}(i)}w_{ij}(\widetilde{y_j}^{t-1}-\frac{1}{\eta}(\hat{x}_j^{t}-x_i^{t}))  - \Delta_i^{t-1}\Big]}_{b^t},\\
\end{equation}
and
\begin{equation}
\Delta_i^{t}= g_{i}^{t} - \frac{1}{\eta}\sum_{j\in \mathcal{N}(i)} w_{ij}(\hat{x}_j^t-x_i^t).
\nonumber
\end{equation}

\begin{algorithm}[h]
    \caption{FedNMUT - Noisy Model Update Tracking }
    \begin{algorithmic}[1]
       \STATE {\bfseries Input:} For each node $i$ initialize: $x_i^{(0)} \  \in \ \mathbb{R}^d$, step size $ \{ \eta_t \}_{t=0}^{T-1} $, neighbors' copy $\hat{x}_j^{0}$, step size $\eta$, scaling factor $\mu$, mixing matrix $W=[w_{ij}]_{i,j \in [1,n]}$, $\mathcal{N}(i)$ represents neighbors of $i$ including itself, noise from the communication channel $\delta^{(t)}$\\
       \FOR{$t = 0, \dots, T - 1 $}
       \STATE { {Run in parallel for each client $i$}}
       \STATE { Sample $\xi_{i}^{(t)} \mbox{, compute } g_i^{(t)} = \widetilde{\nabla} F_i (x_i^{(t)},\xi_i^{(t)})$}
       \STATE{ $\Delta_i^{t}= g_{i}^{t} - \frac{1}{\eta}\sum_{j\in \mathcal{N}(i)} w_{ij}(\hat{x}_j^t-x_i^t)$}
       \STATE{$y_i^{t}= \Delta_i^t + \mu \Big[\sum\limits_{j\in \mathcal{N}(i)}w_{ij}(\widetilde{y_j}^{t-1}-\frac{1}{\eta}(\hat{x}_j^{t}-x_i^{t}))  - \Delta_i^{t-1}\Big]$}
       \STATE{$\widetilde{y_i}^{t}= y_i^{t} + \delta_i^{(t)} $}       \STATE{S\text{\scriptsize END}R\text{\scriptsize ECEIVE}($ \widetilde{y_i}^t$)}
       \STATE{$x_i^{t+1}= x_i^{t} - \eta \widetilde{y_i}^t$ }
       \STATE{$\hat{x}_j^{t+1}=\hat{x}_j^{t}-\eta \widetilde{y_j}^{t} \hspace{2mm} \forall \hspace{2mm} j \in \mathcal{N}(i) \backslash i$}          
       \ENDFOR
    \end{algorithmic}
    \label{alg:FedNMUT}
\end{algorithm}

\subsection{Assumptions}
The assumptions used in the convergence analysis of the proposed FedNMUT decentralized algorithm are commonly used in the literature, see~\cite{hashemi2021benefits,koloskova2020unified,koloskova2019decentralized}.
\begin{assumption}[\textbf{Smoothness}]
\label{as1} 
The objective function $F_i(x, \xi)$ is $L$-smooth with respect to $x$, for all $\xi$. Each $f_i(x)$ is $L$-smooth, that is,
\begin{equation}
\|\nabla f_i (x) - \nabla f_i (y)\| \leq L \|x-y\|, \qquad \mbox{for all x,y} .
\nonumber
\end{equation}
Hence the function $f$ is also $L$-smooth.
\end{assumption}
\begin{assumption}[\textbf{Bounded Variance}] 
\label{as2} 
The variance of the stochastic gradient of each client $i$ is bounded, that is, $$\E[||{\nabla}F_i (x_i^t,\xi_i^t) - \nabla f_i (x_i^t)||^2] \leq \sigma^2,$$ where $\xi_i^t$ denotes random batch of samples in client node $i$ for $t^{th}$ round, and ${\nabla}F_i(x_i^t,\xi_i^t)$ denotes the stochastic gradient. In addition,  we also assume that the stochastic gradient is unbiased, that is, $\E[{\nabla}F_i (x_i^t,\xi_i^t)] = \nabla f_i (x_i^t)$.
\end{assumption}
\begin{assumption}[\textbf{Mixing matrix}] 
\label{as3} 
 For $\rho \in (0,1]$, the mixing matrix W satisfies,
\begin{equation*}
    \|(\Bar{X} - X)W \|_{F}^2 \leq (1 - \rho)\|\Bar{X} - X\|_{F}^2,
\end{equation*}
which means that the gossip averaging step brings the columns of $X \in \mathbb{R}^{d \times n}$ closer to the row-wise average, that is, $\Bar{X}= X\frac{\mathds{1}\mathds{1}^{\top}}{n}$.
\end{assumption}
{Note that standard topologies such as ring,  torus, and	fully-connected satisfy the above assumption.} 
\begin{assumption}[\textbf{Noise model}] 
\label{as4} 
The noise present due to contamination of communication channel $\delta_i^{(t)}$ is independent, has zero mean and bounded variance, that is, \\
\hspace{10mm}$\E[\delta_i^{(t)}] = 0$ and $\E[||\delta_i^{(t)}||^2] = D_{t,i}^2 < \infty$. 
\end{assumption}
The above assumption is specific to the imperfect information sharing setup and is also used in \cite{reisizadeh2022distributed,reisizadeh2022almost,upadhyay2023improved,wei2022federated}.

\begin{assumption}[\textbf{Bounded Client Dissimilarity (BCD)}] 
\label{as5} 
For all $x \in \mathbb{R}^{d}$, 
\begin{equation*}
    \label{eq:bcd}
    \frac{1}{n}\sum_{i=1}^{n}\|\nabla f_{i}(x) - \nabla f(x)\|^2 \leq \zeta^2
\end{equation*}
for some constant $\zeta$.
\end{assumption}
The above assumption is made to limit the extent of client heterogeneity and is standard in the DFL setup.



\captionsetup{font=normal}
\captionsetup[sub]{font=normal}
\begin{figure*}[t]
\vspace{0.1in}
\begin{subfigure}{0.33\textwidth}
    \centering    \includegraphics[width=\textwidth]{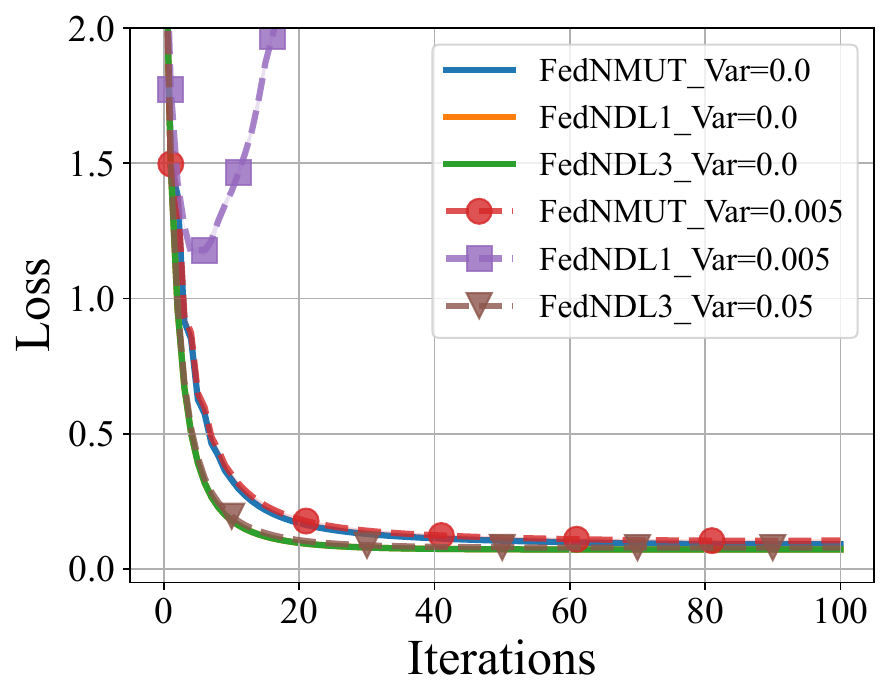}
    \caption{Fully-connected topology}
    \label{fig:syn_1_0}
\end{subfigure}
\begin{subfigure}{0.33\textwidth}
    \centering
    \includegraphics[width=\textwidth]{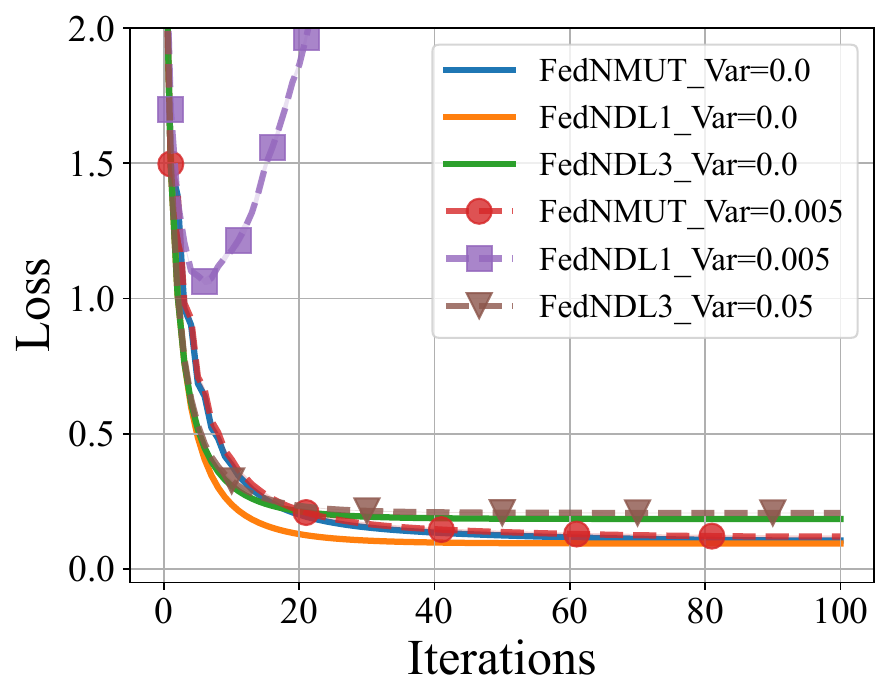}
    \caption{Torus topology}
    \label{fig:syn_1_0.005}
\end{subfigure}
\begin{subfigure}{0.33\textwidth}
    \centering
    \includegraphics[width=\textwidth]{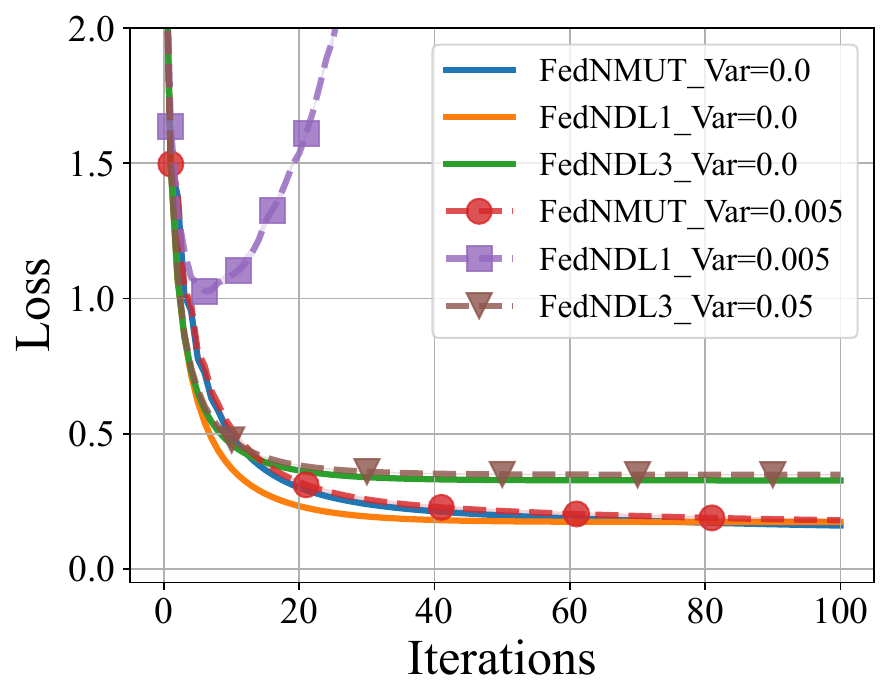}
    \caption{Ring topology} 
    \label{fig:syn_1_0.001}
\end{subfigure} 
\caption{Loss versus iterations with and without noise (Var=0.005) for different communication topologies. $\mu$ = 0.02}
\label{fig:loss_all_1}
\vspace{-4mm}
\end{figure*}

\begin{figure*}[t]
\begin{subfigure}{0.345\textwidth}
    \centering
    \includegraphics[width=\textwidth]{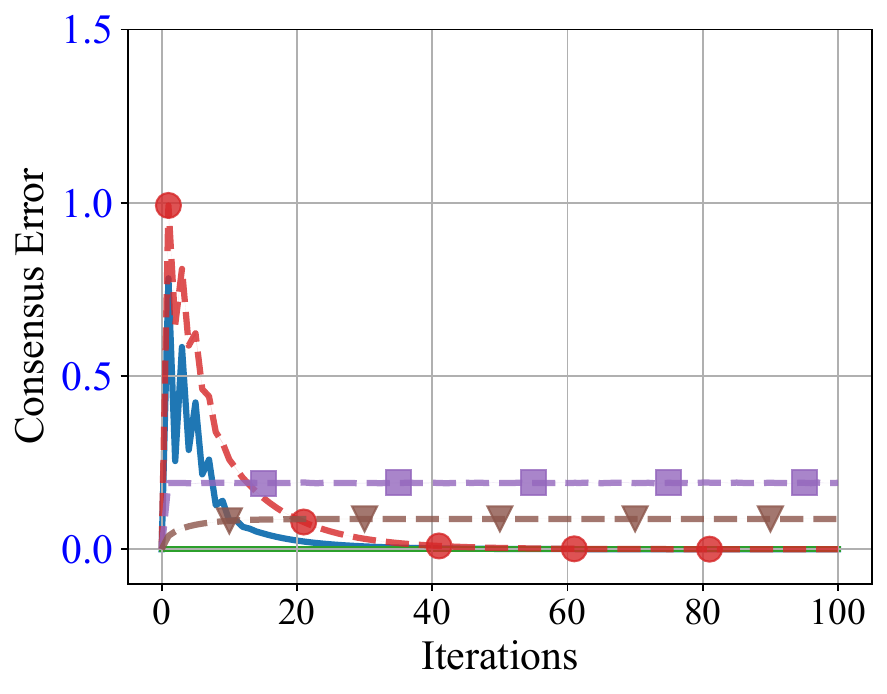}
    \caption{Fully-connected topology} 
    \label{fig:syn_1_0.005_full_cons}
\end{subfigure}
\begin{subfigure}{0.33\textwidth}
    \centering
    \includegraphics[width=\textwidth]{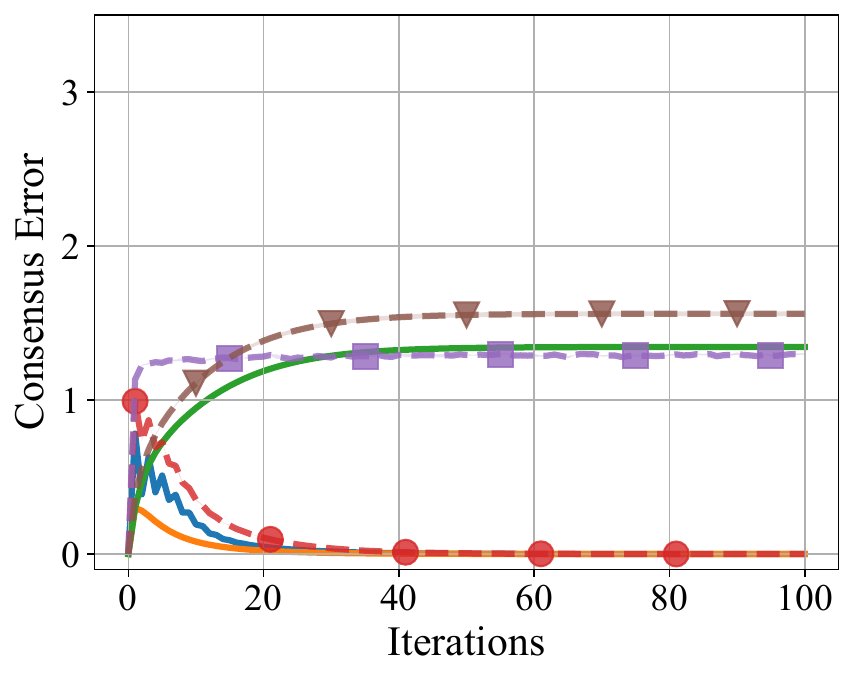}
    \caption{Torus topology}
    \label{fig:syn_1_0.005_torus_cons}
\end{subfigure}
\begin{subfigure}{0.33\textwidth}
    \centering
    \includegraphics[width=\textwidth]{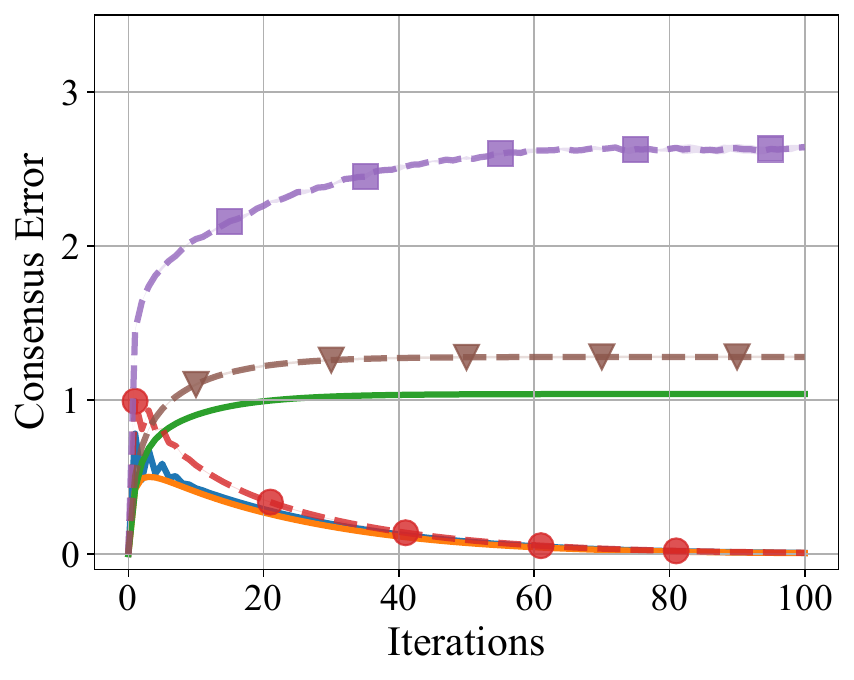}
    \caption{Ring topology} 
    \label{fig:syn_1_0.005_ring_cons}
\end{subfigure}
\caption{Consensus error versus iterations with and without noise (Var=0.005) for different communication topologies. $\mu$ = 0.02. \textit{Note the different Y-axis scale in Fig (a) as compared with Fig (b) and (c) for better readability.}}
\label{fig:consensus_all_1}
\vspace{-4mm}
\end{figure*}

\captionsetup{font=normal}
\captionsetup[sub]{font=normal}
\begin{figure*}[t]
\vspace{0.1in}
\begin{subfigure}{0.33\textwidth}
    \centering    \includegraphics[width=\textwidth]{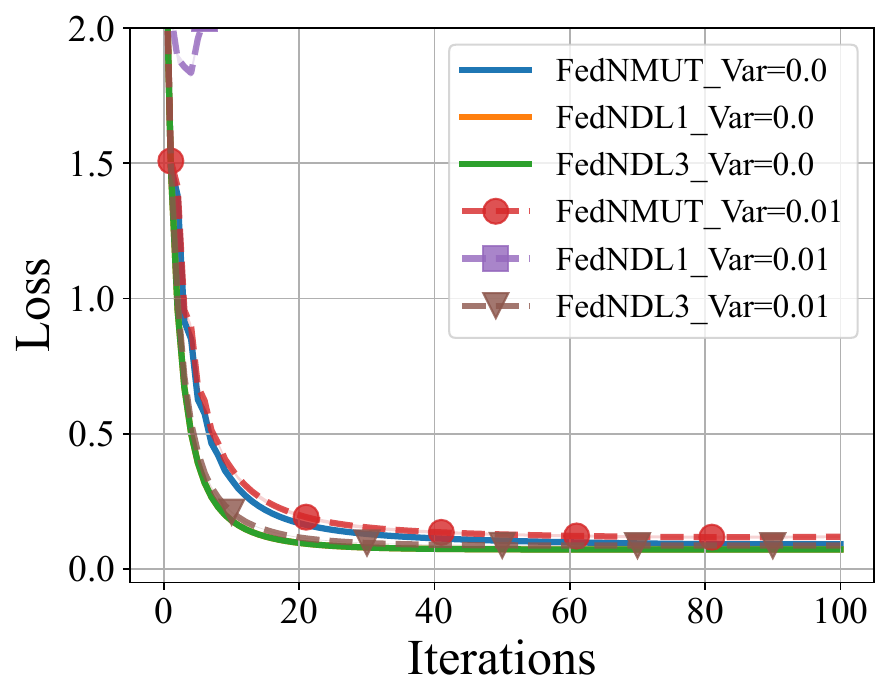}
    \caption{Fully-connected topology}
    \label{fig:syn_1_0}
\end{subfigure}
\begin{subfigure}{0.33\textwidth}
    \centering
    \includegraphics[width=\textwidth]{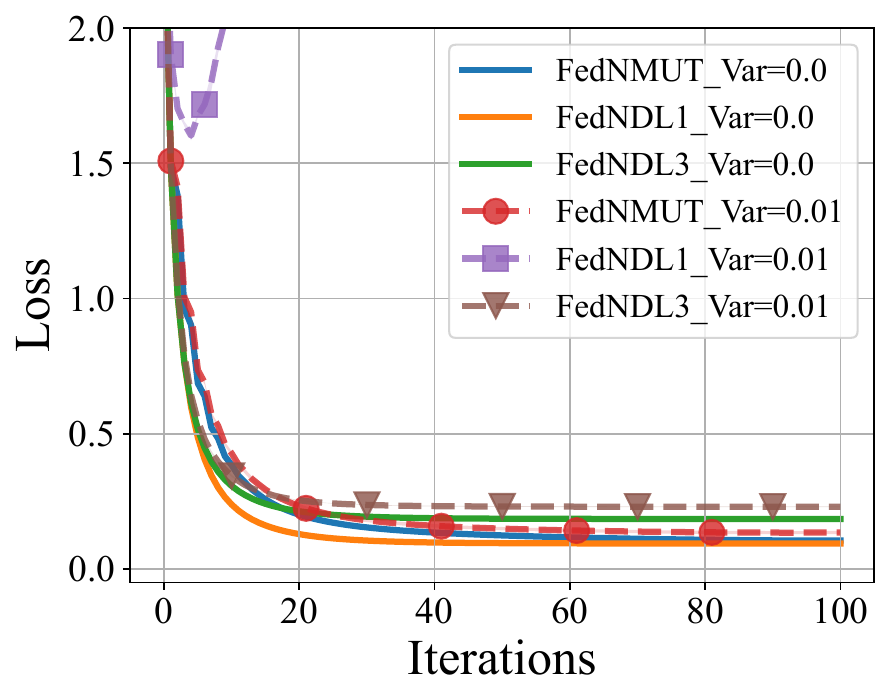}
    \caption{Torus topology}
    \label{fig:syn_1_0.005}
\end{subfigure}
\begin{subfigure}{0.33\textwidth}
    \centering
    \includegraphics[width=\textwidth]{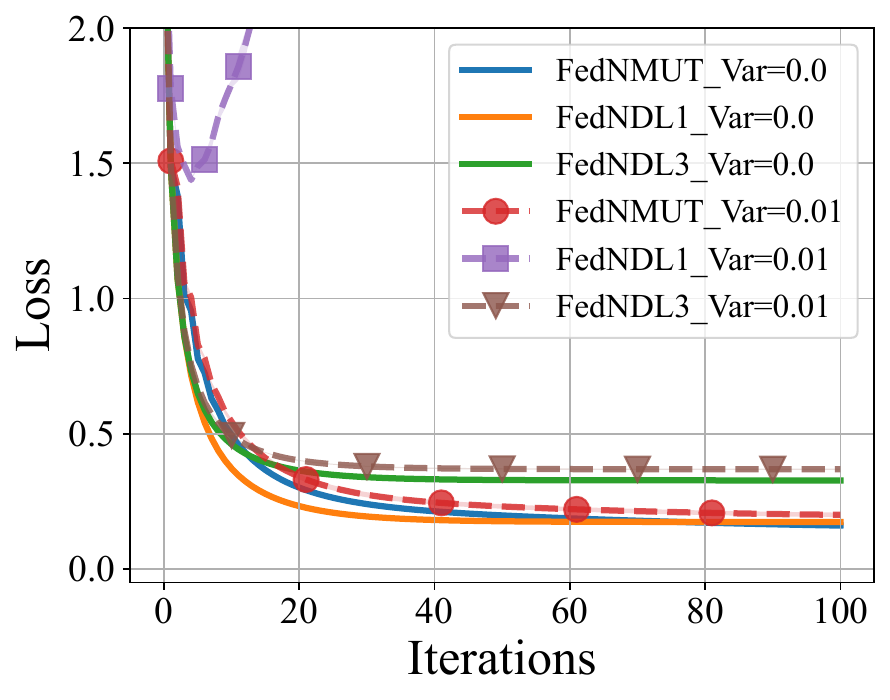}
    \caption{Ring topology} 
    \label{fig:syn_1_0.001}
\end{subfigure} 
\caption{Loss versus iterations with and without noise (Var=0.01) for different communication topologies. $\mu$ = 0.02}
\label{fig:loss_all_2}
\vspace{-4mm}
\end{figure*}

\begin{figure*}[t]
\begin{subfigure}{0.345\textwidth}
    \centering
    \includegraphics[width=\textwidth]{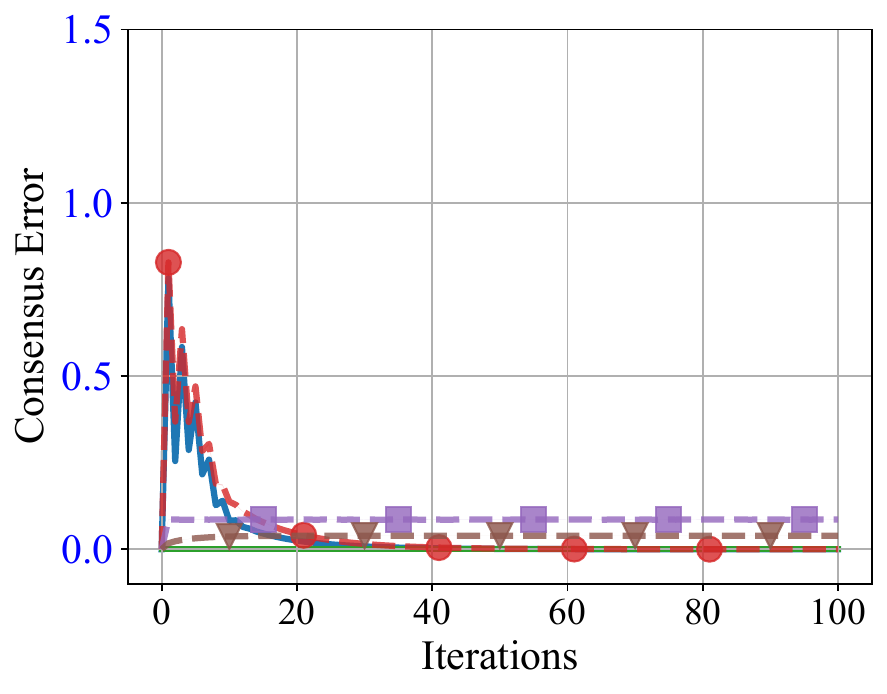}
    \caption{Fully-connected topology} 
    \label{fig:syn_1_0.005_full_cons}
\end{subfigure}
\begin{subfigure}{0.33\textwidth}
    \centering
    \includegraphics[width=\textwidth]{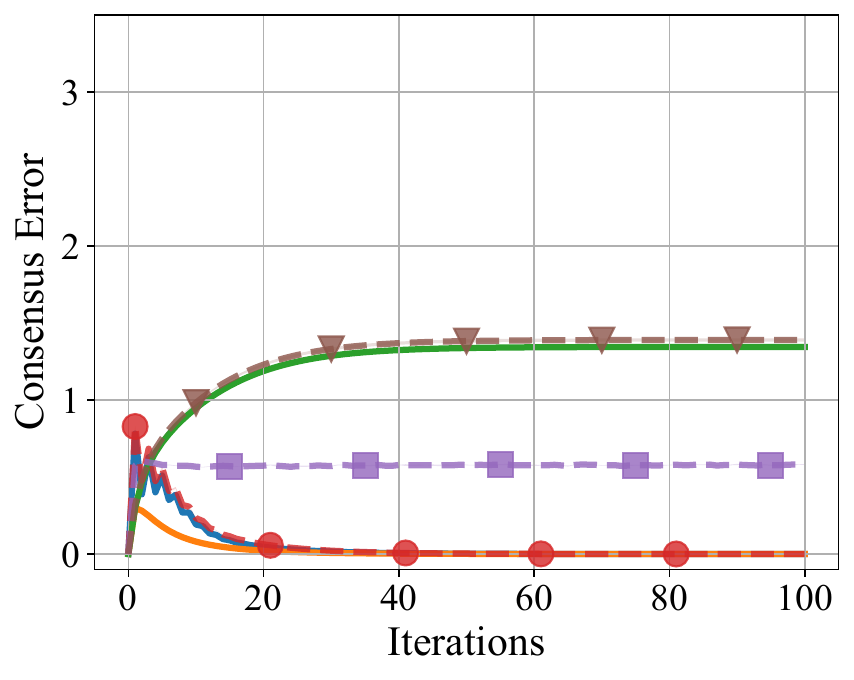}
    \caption{Torus topology}
    \label{fig:syn_1_0.005_torus_cons}
\end{subfigure}
\begin{subfigure}{0.33\textwidth}
    \centering
    \includegraphics[width=\textwidth]{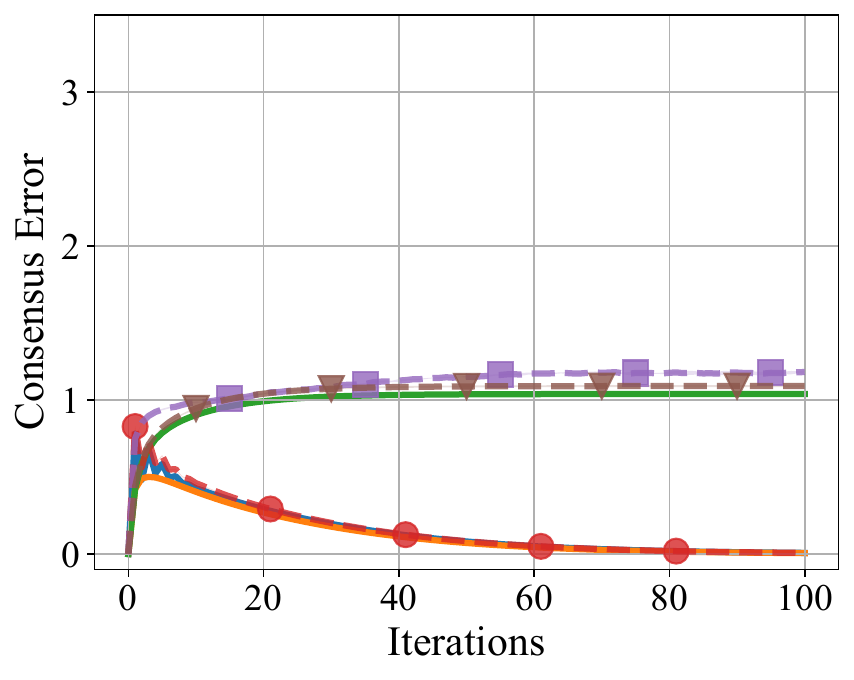}
    \caption{Ring topology} 
    \label{fig:syn_1_0.005_ring_cons}
\end{subfigure}
\caption{Consensus error versus iterations with and without noise (Var=0.01) for different communication topologies. $\mu$ = 0.02. \textit{Note the different Y-axis scale in Fig (a) as compared with Fig (b) and (c) for better readability.}}
\label{fig:consensus_all_2}
\vspace{-4mm}
\end{figure*}

\section{Convergence Analysis}
In this section, we state the main result of this paper that provides an upper bound on the convergence errors of the proposed algorithm. The convergence results are for non-convex $L$-smooth loss functions in the presence of noise. We first present technical results in the form of lemmas that are being used in the proof of the main result of the paper. Detailed proofs of the lemmas and theorem are provided in the appendix of the paper.
\begin{lemma}
\label{lemma1}
Suppose Assumption~\ref{as3} holds and let $\Bar{b}^t = B^t \frac{1}{n}  \mathds{1}$, where $\mathds{1}$ is a vector of all ones, then for all $t$, we have \\
 $$\E[\Bar{b}^t] = 0. $$
\end{lemma}

\begin{lemma}
\label{lemma2}
If Assumptions~\ref{as1}--\ref{as4} are satisfied and $\eta \leq \frac{1}{4L}$, then  \\
\begin{align*}
\mathbb{E} f(\Bar{x}^{t+1}) \leq \mathbb{E} f(\Bar{x}^{t}) + \frac{L \eta^2 \sigma^2}{2n}  -\frac{3\eta}{8} \ \mathbb{E} \|\frac{1}{n} \sum_{i=1}^n  \nabla f_i(x_i^t)\|^2 \\
- \frac{\eta}{2} \ \mathbb{E} \| \nabla f(\Bar{x}^t)\|^2 + \frac{L^2 \eta}{2n} \ \mathbb{E} \|X^t-\Bar{X}^t\|_F^2 + \frac{L \eta^2}{2} \frac{1}{n}\sum_{i=1}^{n}D^2_{t,i}\\
\hspace{2mm}+ \frac{L \mu^2 \eta^2}{2 n} \mathbb{E} \|B^t\|_F^2 .
\end{align*}
\end{lemma}
\begin{lemma}
\label{lemma3}
If Assumptions~\ref{as1}--\ref{as5} are satisfied and $\eta \leq \frac{\rho}{7L}$, then \\  
\begin{align*}
\frac{1}{n} \mathbb{E} ||X^{t+1} - \Bar{X}^{t+1}||_F^2 \leq \frac{1-\rho/4}{n} \mathbb{E} ||X^{t} - \Bar{X}^{t}||_F^2 +\frac{4 \eta^2 \zeta^2}{\rho} \\
 + \frac{4 \eta^2 \sigma^2}{n}  + \frac{6 \eta^2 \mu^2}{\rho n} \mathbb{E} ||B^t||_F^2 + \frac{4\eta^2}{n\rho} D^{2,t}. 
\end{align*}
\end{lemma}
\begin{lemma}   
\label{lemma4}
If Assumptions~\ref{as1}--\ref{as5} are satisfied and $\frac{\mu}{1-\mu} \leq \frac{\rho}{42}$, then  \\
\begin{align*}
 \frac{6 \eta^2 \mu^2}{\rho n (1-\mu)} \mathbb{E} \|B^{t+1}\|_F^2 \leq  \Big(\frac{6 \eta^2 \mu^2}{n \rho (1-\mu)} - \frac{6 \eta^2 \mu^2}{n \rho} \Big) \mathbb{E}\| B^t\|_F^2  \\
 +\frac{\rho}{8n}\mathbb{E}\|X^{t}-\Bar{X}^t\|_F^2  +  \frac{\eta^2 \rho \sigma^2(1-\mu)}{8}  +  \frac{\eta^2 \rho   }{8 n} \sum_{i=1}^{n}{D}^{2}_{t,i} +\frac{ \eta^2 \rho \zeta^2}{ 8}.
\end{align*}
\end{lemma} 
Using the above lemmas, we can state and prove the following theorem.
\begin{theorem}[\textbf{Smooth non-convex cases for FedNMUT}]
\label{thm-all}
If Assumptions~\ref{as1}--\ref{as5} are satisfied and $\eta \leq \min\Big\{\frac{1}{4L}, \frac{\rho}{7L}\Big\}$, $\frac{\mu}{1-\mu} \leq \frac{\rho}{42}$, and $\frac{6 \mu^2 }{\rho (1-\mu)}  \leq \frac{\rho}{8}$, then
\begin{align*}
          \nonumber
        &\frac{1}{T} \sum_{t=0}^{T-1} \mathbb{E} \| \nabla f(\Bar{x}^t)\|^2 \leq \frac{2}{\eta T}(f(\Bar{x}^0  - f^* ) +  \frac{L \mu^2 \eta}{ n T}  \sum_{t=0}^{T-1} \mathbb{E} \|B^t\|_F^2\\
        &\hspace{0.25cm} +  2 L^2 \eta^2 \sigma^2 \left[ \frac{16}{\rho n} + \frac{1-\mu}{2} +\frac{1}{2 n L \eta}\right] + 2 L^2 \eta^2 \zeta^2 \left[\frac{48 }{\rho^2} + \frac{1}{2} \right]\\
        &\hspace{0.25cm}  +  \frac{2 L^2 \eta^2}{T} \sum_{t=0}^{T-1} \sum_{i=1}^{n}{D}^{2}_{t,i} \left[ \frac{16}{n\rho^2} + \frac{1}{2} +\frac{1}{2 n L \eta}  \right] 
\end{align*}
where all the expectations are w.r.t. the data and the noise.
\end{theorem}
\begin{proof}
Combining Lemmas~\ref{lemma3} and \ref{lemma4} and simplifying, we obtain
\begin{multline}
\nonumber
    \frac{1}{n} \mathbb{E} \|X^{t+1} - \Bar{X}^{t+1}\|_F^2 + \frac{6 \eta^2 \mu^2}{n \rho (1-\mu)}\mathbb{E}\|B^{t+1}\|_F^2 \leq \\
    \frac{1-\rho/4}{n} \mathbb{E} \|X^t -\Bar{X}^t\|_F^2 +{4 \eta^2 \sigma^2}+ \frac{12 \eta^2 \zeta^2}{\rho} +\frac{6\eta^2\mu^2}{n\rho} \mathbb{E} \|B^t\|_F^2 \\+ \frac{4\eta^2}{n\rho} \sum_{i=1}^{n}{D}^{2}_{t,i} + \Big(\frac{6 \eta^2 \mu^2}{n \rho (1-\mu)} - \frac{6 \eta^2 \mu^2}{n \rho} \Big) \mathbb{E}\| B^t\|_F^2 \\  +\frac{\rho}{8n}\mathbb{E}\|X^{t}-\Bar{X}^t\|_F^2 +  \frac{\eta^2 \rho \sigma^2(1-\mu)}{8} +  \frac{\eta^2 \rho }{8 n}\sum_{i=1}^{n}{D}^{2}_{t,i} +\frac{ \eta^2 \rho \zeta^2}{8}.
\end{multline}
Simplifying and multiplying the above equation by $\frac{4 L^2 \eta}{\rho}$ gives 
\begin{multline*}
    \frac{4 L^2 \eta}{n \rho} \mathbb{E} \|X^{t+1} - \Bar{X}^{t+1}\|_F^2 + \frac{24 L^2 \eta^3 \mu^2}{n \rho^2 (1-\mu)}\mathbb{E}\|B^{t+1}\|_F^2 
    \leq \\
    \left[ \frac{4 L^2 \eta}{\rho n} - \frac{L^2 \eta}{ n} \right] \mathbb{E} \|X^t -\Bar{X}^t\|_F^2 +\frac{16 L^2 \eta^3 \sigma^2}{\rho n}+ \frac{48 L^2 \eta^3 \zeta^2}{\rho^2} \\
    +\frac{24 L^2 \eta^3\mu^2}{n\rho^2} \mathbb{E} \|B^t\|_F^2 + \frac{16 L^2 \eta^3}{n\rho^2} D^{2,t} +\frac{L^2 \eta}{2n}\mathbb{E}\|X^{t}-\Bar{X}^t\|_F^2 \\ 
    +  \frac{L^2 \eta^3 \sigma^2(1-\mu)}{2} +  \frac{L^2 \eta^3 D^{t,2}}{2} +\frac{ L^2 \eta^3 \zeta^2}{2}\\
        + \Big(\frac{24 L^2 \eta^3 \mu^2}{n \rho^2 (1-\mu)} - \frac{24 L^2 \eta^3 \mu^2}{n \rho^2} \Big) \mathbb{E}\| B^t\|_F^2 
    \end{multline*}
    \begin{multline*}
        \\
        \leq \left[ \frac{4 L^2 \eta}{n \rho} - \frac{L^2 \eta}{2 n} \right] \mathbb{E} \|X^t -\Bar{X}^t\|_F^2  + \frac{24 L^2 \eta^3 \mu^2}{n \rho^2 (1-\mu)} \mathbb{E}\| B^t\|_F^2  \\
        + L^2 \eta^3 \sigma^2 \left[ \frac{16}{\rho n} + \frac{1-\mu}{2}\right] +  L^2 \eta^3 \zeta^2 \left[\frac{48 }{\rho^2} + \frac{1}{2} \right] \\
        +  L^2 \eta^3 \sum_{i=1}^{n}{D}^{2}_{t,i} \left[ \frac{16}{n\rho^2} + \frac{1}{2}  \right].
        \end{multline*}
\\
\begin{eqnarray}
    \label{eq:phi}
    \nonumber
        \text{Let } \ \Phi^t = \frac{4 L^2 \eta}{n \rho} \mathbb{E} \|X^{t} - \Bar{X}^{t}\|_F^2 +\frac{24 L^2 \eta^3 \mu^2}{n \rho^2 (1-\mu)} \mathbb{E} \|B^{t}\|_F^2 \\
        + \mathbb{E}[f(\Bar{x}^t)-f^*].
\end{eqnarray} \\
Combining Lemma~\ref{lemma2} and  Equation \ref{eq:phi} gives
\begin{multline*}
        \! \! \! \! \! \Phi^{t+1} \leq \left[ \frac{4 L^2 \eta}{n \rho} - \frac{L^2 \eta}{2 n} \right] \mathbb{E} \|X^t -\Bar{X}^t\|_F^2  + \frac{24 L^2 \eta^3 \mu^2}{n \rho^2 (1-\mu)} \mathbb{E}\| B^t\|_F^2 \\
         + L^2 \eta^3 \sigma^2 \left[ \frac{16}{\rho n} + \frac{1-\mu}{2}\right] +  L^2 \eta^3 \zeta^2 \left[\frac{48 }{\rho^2}  + \frac{1}{2} \right]  + \frac{L \mu^2 \eta^2}{2 n} \mathbb{E} \|B^t\|_F^2 \\
         +  L^2 \eta^3 D^{t,2} \left[ \frac{16}{n\rho^2} + \frac{1}{2}  \right]    + \frac{L \eta^2 \sigma^2}{2n}  -\frac{3\eta}{8} \ \mathbb{E} \|\frac{1}{n} \sum_{i=1}^n  \nabla f_i(x_i^t)\|^2 \\
         - \frac{\eta}{2} \ \mathbb{E} \| \nabla f(\Bar{x}^t)\|^2    + \frac{L^2 \eta}{2n} \ \mathbb{E} \|X^t-\Bar{X}^t\|_F^2 + \frac{L \eta^2}{2n}D^{2,t} 
    \end{multline*}
    \begin{multline*}
    \le \Phi^{t} +  L^2 \eta^3 \sigma^2 \left[ \frac{16}{\rho n} + \frac{1-\mu}{2} +\frac{1}{2 n L \eta}\right] +  L^2 \eta^3 \zeta^2 \left[\frac{48 }{\rho^2} + \frac{1}{2} \right]  \\
         +  L^2 \eta^3 D^{t,2} \left[ \frac{16}{n\rho^2} + \frac{1}{2} +\frac{1}{2 n L \eta}  \right] + \frac{L \mu^2 \eta^2}{2 n} \mathbb{E} \|B^t\|_F^2 \\
         -\frac{3\eta}{8} \ \mathbb{E} \|\frac{1}{n} \sum_{i=1}^n  \nabla f_i(x_i^t)\|^2 - \frac{\eta}{2} \ \mathbb{E} \| \nabla f(\Bar{x}^t)\|^2. 
\end{multline*} 
Rearranging, we obtain
\begin{multline*} 
        \frac{\eta}{2} \mathbb{E} \| \nabla f(\Bar{x}^t)\|^2 \leq (\Phi^{t}  - \Phi^{t+1} ) +  \frac{L \mu^2 \eta^2}{2 n} \mathbb{E} \|B^t\|_F^2 \\
        +  L^2 \eta^3 \sigma^2 \left[ \frac{16}{\rho n} + \frac{1-\mu}{2} +\frac{1}{2 n L \eta}\right] + L^2 \eta^3 \zeta^2 \left[\frac{48 }{\rho^2} + \frac{1}{2} \right] \\
        +  L^2 \eta^3 D^{t,2} \left[ \frac{16}{n\rho^2} + \frac{1}{2} +\frac{1}{2 n L \eta}  \right]  -\frac{3\eta}{8} \ \mathbb{E} \|\frac{1}{n} \sum_{i=1}^n  \nabla f_i(x_i^t)\|^2.
\end{multline*} 
Telescoping over iterations $t$ from $0$ to $T$ yields
\begin{multline*}
\label{eq:final}
          \nonumber
          \!\!\!\! \!\!\!\!\frac{1}{T} \sum_{t=0}^{T-1} \mathbb{E} \| \nabla f(\Bar{x}^t)\|^2 \leq \frac{2}{\eta T}(f(\Bar{x}^0  - f^* ) +  \frac{L \mu^2 \eta}{ n T}  \sum_{t=0}^{T-1} \mathbb{E} \|B^t\|_F^2 \\
          +  2 L^2 \eta^2 \sigma^2 \left[ \frac{16}{\rho n} + \frac{1-\mu}{2} +\frac{1}{2 n L \eta}\right]
         + 2 L^2 \eta^2 \zeta^2 \left[\frac{48 }{\rho^2}  + \frac{1}{2} \right] \\
         +  \frac{2 L^2 \eta^2}{T} \sum_{t=0}^{T-1} \sum_{i=1}^{n}{D}^{2}_{t,i} \left[ \frac{16}{n\rho^2} + \frac{1}{2} +\frac{1}{2 n L \eta}  \right] .
\end{multline*}

This concludes the proof of the theorem.
\end{proof}
Our theorem establishes a worst-case upper bound on the convergence of the proposed algorithm. The theorem bounds the expected gradient norm, which is a notion of approximate first-order stationarity of the average iterate $\bar{x}_t$. The theorem gives the bound on the convergence which consists of five components: the first component is due to inaccurate initialization. The second term captures the variance of the bias term controlled through the scaling factor $\mu$. The third and fourth terms capture the effect of stochasticity and data heterogeneity. The last term captures the impact of communication noise. 

We have the following corollary for the convergence rate.

\begin{corollary}
\label{corol}
Suppose that the step size $\eta=\mathcal{O}\Big(\sqrt{\frac{n}{T}}\Big)$, then for a sufficiently large $T$, we have 
\begin{align}
\nonumber
        \frac{1}{T} \sum_{t=0}^{T-1} \mathbb{E}||\nabla f(\Bar{x}^t)||^2 =\mathcal{O}\Big(\frac{1}{\sqrt{T}}\Big).
\end{align}
\end{corollary}
\begin{proof}
If the step size $\eta$ is $\mathcal{O}(\sqrt{\frac{n}{T}})$, then we have the following order of convergence for each term in Theorem~\ref{thm-all}:
\begin{gather*}
    \frac{L \mu^2 \eta}{ n T}  \sum_{t=0}^{T-1} \mathbb{E} \|B^t\|_F^2 = \mathcal{O}\left[\frac{1}{ \sqrt{n T}}\right]\Bar{B}^2= \mathcal{O}\left[\frac{1}{ \sqrt{ T}}\Bar{B}^2\right],
    \end{gather*}
    \begin{gather*}
    2 L^2 \eta^2 \sigma^2 \left[ \frac{16}{\rho n} + \frac{1-\mu}{2} +\frac{1}{2 n L \eta}\right] =   \frac{2 L^2 \eta^2 \sigma^2}{\rho n} \\ 
    +  L^2 \eta^2 \sigma^2(1-\mu) +\frac{L \eta \sigma^2}{n}    = \mathcal{O}\left[\frac{1}{\sqrt{ T}} \sigma^2\right],
    \end{gather*}
    \begin{gather*}
    2 L^2 \eta^2 \zeta^2 \left[\frac{48 }{\rho^2} + \frac{1}{2} \right] =   \frac{96 L^2 \eta^2 \zeta^2 }{\rho^2} + L^2 \eta^2 \zeta^2 = \mathcal{O}\left[\frac{1}{T}  \zeta^2\right],
    \end{gather*}
    \begin{gather*}
    \frac{2 L^2 \eta^2}{T} \sum_{t=0}^{T-1} \sum_{i=1}^{n}{D}^{2}_{t,i} \left[ \frac{16}{n\rho^2} + \frac{1}{2} +\frac{1}{2 n L \eta}    \right]= \\\frac{32 L^2 \eta^2 \Bar{D}^2 n T}{n\rho^2 T} +  \frac{L^2 \eta^2 \Bar{D}^2 n T}{T} +\frac{L \eta \Bar{D}^2 n T}{ n T }  = \mathcal{O}\left[\frac{1}{\sqrt{T}} \Bar{D}^2 \right].
\end{gather*}
\\
The overall convergence rate is
\begin{align*}
     \frac{1}{T} \sum_{t=0}^{T-1} \mathbb{E} \| \nabla f(\Bar{x}^t)\|^2 = &\mathcal{O}\Big[\frac{1}{ \sqrt{ T}}\Bar{B}^2+\frac{1}{\sqrt{ T}} \sigma^2 \\
     &\hspace{5mm} +\frac{1}{T}  \zeta^2+\frac{1}{\sqrt{T}} \Bar{D}^2 \Big].
\end{align*}

Therefore, at large $T$, the convergence rate of \textit{FedNMUT} is\textbf{ $\mathcal{O}\Big(\frac{1}{\sqrt{T}}\Big)$}.
\end{proof}
It follows from the Corollary~\ref{corol} that the FedNMUT algorithm can achieve a linear speedup, exhibiting a convergence rate of $\mathcal{O}\Big(\frac{1}{\sqrt{T}}\Big)$, given that $T$ is large enough and unaffected by the communication topology. 
This rate of convergence is comparable to the best-known results for decentralized SGD algorithms found in existing literature\cite{xin2020decentralized,nedic2014distributed}.

\section{Numerical Experiments}\label{sec:exp}
We conducted a series of experiments on regression tasks to evaluate the effects of noise on the convergence behavior of the proposed algorithm. We set the number of clients $n=16$. Each experiment is executed thrice, with the outcomes (loss/consensus error) being averaged. Our approach employs a mean-squared error loss function enhanced with $L_2$ regularization. We set the initial learning rate at 0.2, applying a decay factor of 0.9 in subsequent iterations. We create data samples ($m$ = 10,000) in the form of ${(x_i; y_i)}^{m}_{i=1}$, modeled as $y_i = \langle w, x_i \rangle + \epsilon_i$, where $w$ belongs to $\mathbb{R}^{2000}$, $x_i$ follows a normal distribution $\mathcal{N}(0; I_{2000})$, and noise $\epsilon_i$ adheres to $\mathcal{N}(0, 0.05)$.

Our experimentation spans various noise variances, $D_{t,i}^2$, across all $t,i$, reflecting the conditions specified in the algorithm. These tests encompass different communication topologies, specifically ring, torus, and fully connected networks. In these topologies, the nonzero elements of the mixing matrix hold the values of $\frac{1}{3}$, $\frac{1}{5}$, and $\frac{1}{n}$, respectively.

We commence with noise-free trials to establish a baseline, subsequently incrementing noise variance to examine algorithmic robustness. For consistency, the experimental findings with noise variances of $D^{2}_{t} = 0.005$ and $D^{2}_{t} = 0.01$ are depicted alongside the no-noise condition in \Cref{fig:loss_all_1,fig:consensus_all_1,fig:loss_all_2,fig:consensus_all_2}. These tests were conducted using Intel's Xeon Gold workstation.

We observe in~\Cref{fig:loss_all_1,fig:loss_all_2} that the FedNDL1 algorithm performs poorly in terms of convergence due to the presence of noise compared to FedNDL3 and FedNMUT. The FedNDL1 and FedNDL2 perform similarly. The FedNMUT algorithm outperforms FedNDL3 in presence of noise.


The consensus error depends on the topology of the communication network. We observed that the consensus error is low for the fully connected network and high for the ring topology for the same algorithm in the presence of noise which is also consistent with the~\Cref{thm-all}. The consensus error function plots in~\Cref{fig:consensus_all_1,fig:consensus_all_2} are consistent with the connectivity of the communication network (number of client interactions). The fully connected topology encompasses the maximum number of clients interaction. It therefore, yields the lowest consensus error, followed by the torus topology and then the ring topology, which has the lowest number of client interactions.

\section{Summary and Conclusions}
We studied the impact of noisy communication channels on the convergence of Decentralized Federated Learning with Model Update Tracking approach. We proposed multiple scenarios for establishing consensus in the presence of noise and provided experimental results from our algorithm testing. Additionally, we provided theoretical results for FedNMUT under the assumption of smooth non-convex function, and we observed that the noise term in the upper bound given by~\Cref{thm-all} is of order $\mathcal{O}(\frac{1}{\sqrt{n T}})$ and is independent of communication topology. We conducted numerical experiments and observed that FedNMUT is more robust against the added noise than the existing state-of-the-art algorithms that include communication noise.

\bibliographystyle{ieeetr}
\bibliography{paper_journal}

\clearpage
\appendix
\setcounter{lemma}{0}
\setcounter{assumption}{0}
\section{Assumptions, Definitions, and Background Results}
\label{appendix:convergence_rate_proofs}
In this work, we solve the optimization problem of minimizing global loss function $f(x)$ distributed across $n$ clients as given below. 
Note that $F_i$ is a local loss function defined in terms of the data sampled ($\xi_i$) from the local dataset $D_i$ at client $i$.
\begin{equation*}
\begin{split}
    \min \limits_{x \in \mathbb{R}^d} f(x) &= \frac{1}{n}\sum_{i=1}^n f_i(x_i),
\end{split}
\end{equation*}
\begin{equation*}
    \label{eq:local-equation}
    f_i(x_i) = \E_{\xi_i \sim \mathcal{D}_{i}}[F_i(x_i,\xi_i)].
\end{equation*}
We reiterate the update scheme of \textit{FedNMUT} in a matrix form:
\begin{align} 
\label{eq:appendix_our_scheme_matrix_form}
	\begin{split}
		X^{t+1} 	&= X^t - \eta \widetilde{Y}^t = X^t - \eta ({Y}^t+\delta^t)\\
    {Y}^t & = \Delta^t + \mu [W \widetilde{Y}^{t-1} - \frac{1}{\eta} (W-I) X^t - \Delta^{t-1}] \\
        \Delta^t & = G^t - \frac{1}{\eta} (W-I) X^t,
	\end{split}
\end{align}
where $W$ is the mixing matrix, $I$ is the identity matrix, $X=[x_1,x_2, \hdots,x_n] \in \mathbb{R}^{d \times n}$ is the matrix containing model parameters, $x_i \in \mathbb{R}^{d}$ is model parameters of client $i$, $Y=[y_1,y_2, \hdots,y_n] \in \mathbb{R}^{d \times n}$ is the matrix containing tracking variables, $G=[g_1,g_2, \hdots,g_n] \in \mathbb{R}^{d \times n}$ is the matrix containing local gradients, $\widetilde{Y}=[\widetilde{y_1},\widetilde{y_2}, \hdots,\widetilde{y_n}] \in \mathbb{R}^{d \times n}$ is the matrix containing tracking variables with noises from communication channels, $\delta = [\delta_1,\delta_2, \hdots,\delta_n]$, $G=[g_1,g_2, \hdots,g_n] \in \mathbb{R}^{d \times n}$ is the matrix containing local gradients, $\mu$ is the \textit{FedNMUT} scaling factor, $\eta$ is the learning rate. Now, we rewrite the above equation in the form of a bias correction update and communication noise,

\begin{align} 
\label{eq:appendix_bias_matrix_form}
	\begin{split}
		X^{t+1} 	&= W X^t - \eta (G^t+\mu B^t + \delta^{t}) \\
		B^t & = -\frac{1}{\eta} [(2W-I) (X^{t}-X^{t-1}) +\eta G^{t-1} ]. \\
	\end{split}
 \end{align}

\subsection{Assumptions}
We now discuss the assumptions made in our analysis of the algorithms. 
\begin{assumption}[\textbf{Smoothness}]
\label{as1} 
The objective function $F_i(x, \xi)$ is $L$-smooth with respect to $x$, for all $\xi$. Each $f_i(x)$ is $L$-smooth, that is,
\begin{equation}
\|\nabla f_i (x) - \nabla f_i (y)\| \leq L \|x-y\|, \qquad \mbox{for all x,y} .
\nonumber
\end{equation}
Hence the function $f$ is also $L$-smooth.
\end{assumption}
\begin{assumption}[\textbf{Bounded Variance}] 
\label{as2} 
The variance of the stochastic gradient of each client $i$ is bounded, that is, $$\E[||{\nabla}F_i (x_i^t,\xi_i^t) - \nabla f_i (x_i^t)||^2] \leq \sigma^2,$$ where $\xi_i^t$ denotes random batch of samples in client node $i$ for $t^{th}$ round, and ${\nabla}F_i(x_i^t,\xi_i^t)$ denotes the stochastic gradient. In addition,  we also assume that the stochastic gradient is unbiased, i.e., $\E[{\nabla}F_i (x_i^t,\xi_i^t)] = \nabla f_i (x_i^t)$.
\end{assumption}
\begin{assumption}[\textbf{Mixing matrix}] 
\label{as3} 
 For $\rho \in (0,1]$, the mixing matrix W satisfies,
\begin{equation*}
    \|(\Bar{X} - X)W \|_{F}^2 \leq (1 - \rho)\|\Bar{X} - X\|_{F}^2,
\end{equation*}
which means that the gossip averaging step brings the columns of $X \in \mathbb{R}^{d \times n}$ closer to the row-wise average, that is, $\Bar{X}= X\frac{\mathds{1}\mathds{1}^{\top}}{n}$.
\end{assumption}
{Note that standard topologies such as ring,  torus, and	fully-connected satisfy the above assumption.} 
\begin{assumption}[\textbf{Noise model}] 
\label{as4} 
The noise present due to contamination of communication channel $\delta_i^{(t)}$ is independent, has zero mean and bounded variance, that is, \\
\hspace{10mm}$\E[\delta_i^{(t)}] = 0$ and $\E[||\delta_i^{(t)}||^2] = D_{t,i}^2 < \infty$. 
\end{assumption}

\begin{assumption}[\textbf{Bounded Client Dissimilarity (BCD)}] 
\label{as5} 
For all $x \in \mathbb{R}^{d}$, where $\zeta$ is a constant,
\begin{equation*}
    \label{eq:bcd}
    \frac{1}{n}\sum_{i=1}^{n}\|\nabla f_{i}(x) - \nabla f(x)\|^2 \leq \zeta^2.
\end{equation*}
\end{assumption}
The above assumption is made to limit the extent of client heterogeneity and is standard in the DFL setup. 
\\
Further, we define the average gradients $\Bar{g}^t = \frac{1}{n} \sum_{i=1}^n \nabla F_i(x_i^t, \xi_i^t)$ where $\xi_i^t$ is sampled mini-batch of data on client $i$.

\section{Technical Results}
\begin{lemma}
\label{lemma1}
Suppose Assumption~\ref{as3} holds and let $\Bar{b}^t = B^t \frac{1}{n}  \mathbbm{1}$, where $\mathbbm{1}$ is a vector of all ones, then for all $t$, we have $\E[\Bar{b}^t] = 0.$ 
\end{lemma}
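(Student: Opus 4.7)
The plan is to express $\bar b^t$ in a recursive form $\bar b^t = \mu \bar b^{t-1} + \bar\delta^{t-1}$ and then conclude $\mathbb{E}[\bar b^t] = 0$ by induction, using the zero-mean property of the channel noise from Assumption~\ref{as4} and an appropriate initialization. The two ingredients that make this work are (i) the fact that the mixing matrix is stochastic, so $W\mathbbm{1} = \mathbbm{1}$ and hence $(2W - I)\mathbbm{1} = \mathbbm{1}$, and (ii) the consensus update identity obtained by right-multiplying the matrix recursion by $\mathbbm{1}/n$.

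First I would start from the definition of $B^t$ in \eqref{eq:appendix_bias_matrix_form} and compute $\bar b^t = B^t \mathbbm{1}/n$ directly. The $(2W-I)$ factor collapses to the identity when acting on $\mathbbm{1}$, so the expression simplifies to
\begin{equation*}
\bar b^t = -\tfrac{1}{\eta}(\bar x^t - \bar x^{t-1}) - \bar g^{t-1}.
\end{equation*}
Next I would right-multiply the $X$-update in \eqref{eq:appendix_bias_matrix_form} by $\mathbbm{1}/n$. Since $W\mathbbm{1} = \mathbbm{1}$, the mixing step preserves the column average, giving
\begin{equation*}
\bar x^t - \bar x^{t-1} = -\eta\bigl(\bar g^{t-1} + \mu \bar b^{t-1} + \bar\delta^{t-1}\bigr).
\end{equation*}
Substituting this into the expression for $\bar b^t$, the $\bar g^{t-1}$ terms cancel and we obtain the clean recursion $\bar b^t = \mu \bar b^{t-1} + \bar\delta^{t-1}$.

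Finally, taking expectations and invoking $\mathbb{E}[\delta_i^{(t-1)}] = 0$ from Assumption~\ref{as4} gives $\mathbb{E}[\bar b^t] = \mu\,\mathbb{E}[\bar b^{t-1}]$. A straightforward induction, anchored at a natural initialization (e.g., $B^0 = 0$, or equivalently $X^{-1} = X^0$ and $G^{-1} = 0$), then yields $\mathbb{E}[\bar b^t] = 0$ for every $t \geq 0$.

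The main subtle point, rather than any hard computation, is ensuring the matrix-dimension conventions are consistent so that $W\mathbbm{1} = \mathbbm{1}$ is the relevant identity (with $W$ acting on the client index) and that the initialization of $B^0$ is indeed mean zero; once these bookkeeping pieces are settled, the cancellation of $\bar g^{t-1}$ and the induction are essentially immediate.
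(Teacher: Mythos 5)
Your proposal is correct and follows essentially the same route as the paper's own proof: collapse $(2W-I)$ against $\mathbbm{1}$ to get $\bar b^t$ in terms of $\bar x^t - \bar x^{t-1}$ and $\bar g^{t-1}$, use the averaged $X$-update to cancel the gradient term and obtain $\bar b^t = \mu \bar b^{t-1} + \bar\delta^{t-1}$, then take expectations with the zero-mean noise assumption and the initialization $\bar b^0 = 0$. Your added remark that the conclusion really rests on Assumption~\ref{as4} (zero-mean noise) and the stochasticity of $W$, rather than on Assumption~\ref{as3} as cited in the lemma statement, is a fair observation but does not change the argument.
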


\begin{proof}
	Starting from the definition of $B^t$
	\begin{align*}
    &B^t = -\frac{1}{\eta} [(2W-I)(X^t-X^{t-1})+\eta G^{t-1}]\\
    & \text{multiply } \frac{1}{n}  \mathbbm{1} \text{ on both sides}\\
    &\Bar{b}^t =  -\frac{1}{\eta} [\Bar{x}^t-\Bar{x}^{t-1}+\eta \Bar{g}^{t-1}] \hspace{4mm} (\because (2W-I)\mathbbm{1}=\mathbbm{1})\\
    & \text{now, multiplying } \frac{1}{n} \mathbbm{1} \text{ to  } X^{t+1}= W X^t - \eta (G^t+\mu B^t + \delta^t)\\
    &\Bar{x}^{t+1} = \Bar{x}^{t} - \eta \Bar{g}^t - \eta \mu \Bar{b}^t -\eta \Bar\delta^t \implies \Bar{x}^{t} -\Bar{x}^{t-1} + \eta \Bar{g}^{t-1} = - \eta \mu \Bar{b}^{t-1} - \eta \Bar\delta^{t-1}\\
    &\implies \Bar{b}^t = \mu \Bar{b}^{t-1} + \Bar\delta^{t-1}
    \end{align*}
    Now, given that $\Bar{b}^0=0$ and taking the expectation $\Bar{b}^t$ w.r.t noise, we have
    \begin{align*}
        \E[\Bar{b}^t]  = 0.
    \end{align*}
\end{proof}


\begin{lemma} 
\label{lemma2}
Given assumptions 1-3 and $\eta \leq \frac{1}{4L}$, we have  \\
 $\mathbb{E} f(\Bar{x}^{t+1}) \leq \mathbb{E} f(\Bar{x}^{t}) + \frac{L \eta^2 \sigma^2}{2n}  -\frac{3\eta}{8} \ \mathbb{E} \|\frac{1}{n} \sum_{i=1}^n  \nabla f_i(x_i^t)\|^2 - \frac{\eta}{2} \ \mathbb{E} \| \nabla f(\Bar{x}^t)\|^2 + \frac{L^2 \eta}{2n} \ \mathbb{E} \|X^t-\Bar{X}^t\|_F^2 + \frac{L \eta^2}{2} \frac{1}{n}\sum_{i=1}^{n}D^2_{t,i} + \frac{L \mu^2 \eta^2}{2 n} \mathbb{E} \|B^t\|_F^2 $.

\end{lemma}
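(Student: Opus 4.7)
The plan is to apply the descent lemma from $L$-smoothness of $f$ (Assumption~\ref{as1}) between $\bar{x}^t$ and $\bar{x}^{t+1}$, substitute the averaged dynamics, and reduce the resulting terms using Assumptions~\ref{as2} and~\ref{as4}, Lemma~\ref{lemma1}, and the stepsize bound $\eta\le 1/(4L)$.

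First I would take the averaged iterate update already derived in the proof of Lemma~\ref{lemma1}, namely $\bar{x}^{t+1}-\bar{x}^t = -\eta(\bar{g}^t+\mu\bar{b}^t+\bar{\delta}^t)$, and substitute it into
\begin{equation*}
f(\bar{x}^{t+1})\le f(\bar{x}^t)+\langle\nabla f(\bar{x}^t),\,\bar{x}^{t+1}-\bar{x}^t\rangle+\tfrac{L}{2}\|\bar{x}^{t+1}-\bar{x}^t\|^2.
\end{equation*}
Taking conditional expectation with respect to the $\sigma$-algebra $\mathcal{F}_{t-1}$ of all randomness prior to round~$t$'s fresh stochastic gradient and channel noise, I would use: (i)~Assumption~\ref{as2} to replace $\bar{g}^t$ by $\frac{1}{n}\sum_i\nabla f_i(x_i^t)$ in the linear term while bounding the stochastic-variance contribution to the quadratic term by $\sigma^2/n$; (ii)~Assumption~\ref{as4} and the independence of $\bar{\delta}^t$ to kill its cross terms and bound $\mathbb{E}\|\bar{\delta}^t\|^2\le \frac{1}{n}\sum_i D_{t,i}^2$ by Jensen's inequality on the average; (iii)~the fact that $\bar{b}^t$ is $\mathcal{F}_{t-1}$-measurable, so it is treated as deterministic inside the conditional expectation.

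Second, I would apply the algebraic identity
\begin{equation*}
-\eta\langle a,v\rangle+\tfrac{L\eta^2}{2}\|v\|^2 \;=\; -\tfrac{\eta}{2}\|a\|^2+\tfrac{\eta}{2}\|a-v\|^2-\tfrac{\eta(1-L\eta)}{2}\|v\|^2
\end{equation*}
to the signal part with $a=\nabla f(\bar{x}^t)$ and $v=\frac{1}{n}\sum_i\nabla f_i(x_i^t)$. The stepsize bound $\eta\le 1/(4L)$ gives $1-L\eta\ge 3/4$, producing precisely the negative pair $-\tfrac{\eta}{2}\|\nabla f(\bar{x}^t)\|^2-\tfrac{3\eta}{8}\|\tfrac{1}{n}\sum_i\nabla f_i(x_i^t)\|^2$. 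The residual $\|\nabla f(\bar{x}^t)-\frac{1}{n}\sum_i\nabla f_i(x_i^t)\|^2$ is bounded by $\frac{L^2}{n}\|X^t-\bar{X}^t\|_F^2$ using $L$-smoothness of each $f_i$ together with Jensen's inequality on the average over clients, giving the $\frac{L^2\eta}{2n}\mathbb{E}\|X^t-\bar{X}^t\|_F^2$ term. The remaining $\mu\bar{b}^t$ contribution then collapses via $\|\bar{b}^t\|^2\le \frac{1}{n}\|B^t\|_F^2$ (one more Jensen) into the tracking-residual $\frac{L\mu^2\eta^2}{2n}\mathbb{E}\|B^t\|_F^2$. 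Taking total expectation yields the stated inequality.

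The delicate step will be the cross terms $\mathbb{E}\langle\nabla f(\bar{x}^t),\bar{b}^t\rangle$ and $\mathbb{E}\langle\frac{1}{n}\sum_i\nabla f_i(x_i^t),\bar{b}^t\rangle$ that arise either in the linear part of the descent inequality or when expanding $\|\frac{1}{n}\sum_i\nabla f_i(x_i^t)+\mu\bar{b}^t\|^2$. Although Lemma~\ref{lemma1} shows $\mathbb{E}[\bar{b}^t]=0$, these inner products do not individually vanish because $\nabla f(\bar{x}^t)$ and $\frac{1}{n}\sum_i\nabla f_i(x_i^t)$ are both correlated with $\bar{b}^t$ through the shared past noise history encoded by $\bar{b}^t=\mu\bar{b}^{t-1}+\bar{\delta}^{t-1}$. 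The algebraic challenge is to absorb these cross terms into the $\|B^t\|_F^2$ residual via Young's inequality with parameters tuned so that the induced positive contributions to $\|\nabla f(\bar{x}^t)\|^2$ and $\|\tfrac{1}{n}\sum_i\nabla f_i(x_i^t)\|^2$ are strictly dominated by the existing negative coefficients $\frac{\eta}{2}$ and $\frac{3\eta}{8}$, leaving the stated bound intact.
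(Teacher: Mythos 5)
Your outline tracks the paper's proof almost step for step: the $L$-smoothness descent inequality applied to $\bar{x}^{t+1}=\bar{x}^t-\eta(\bar{g}^t+\mu\bar{b}^t+\bar{\delta}^t)$, the polarization identity on the signal part (your combined identity $-\eta\langle a,v\rangle+\tfrac{L\eta^2}{2}\|v\|^2=-\tfrac{\eta}{2}\|a\|^2+\tfrac{\eta}{2}\|a-v\|^2-\tfrac{\eta(1-L\eta)}{2}\|v\|^2$ is exactly the paper's Terms (C) and (D) merged, and $\eta\le 1/(4L)$ gives the same $-3\eta/8$ coefficient), the $\sigma^2/n$ variance term, the Jensen bounds $\mathbb{E}\|\bar{\delta}^t\|^2\le\frac{1}{n}\sum_i D_{t,i}^2$ and $\|\bar{b}^t\|^2\le\frac{1}{n}\|B^t\|_F^2$, and the $L$-smoothness bound of the gradient mismatch by the consensus error. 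All of that is correct and is what the paper does.

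The one place you diverge is also where your proposal has a genuine gap: the cross terms involving $\bar{b}^t$. You are right that $\mathbb{E}\langle\nabla f(\bar{x}^t),\bar{b}^t\rangle$ does not vanish merely because $\mathbb{E}[\bar{b}^t]=0$; since $\bar{b}^t=\sum_{s<t}\mu^{t-1-s}\bar{\delta}^s$ and $\bar{x}^t$ is driven by the same noise history, both factors are measurable with respect to $\mathcal{F}_{t-1}$ and the expectation does not factor. (The paper's own proof sets Term (B) to zero on exactly that ground, and silently discards the cross term between $\bar{g}^t+\bar{\delta}^t$ and $\mu\bar{b}^t$ when splitting the quadratic into Terms (D) and (E); you have correctly located the weakest step of the argument.) However, your proposed repair --- absorb these cross terms by Young's inequality ``leaving the stated bound intact'' --- cannot succeed with the lemma's constants. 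Any split of the form $-\eta\mu\langle\nabla f(\bar{x}^t),\bar{b}^t\rangle\le\tfrac{\eta\mu\epsilon}{2}\|\nabla f(\bar{x}^t)\|^2+\tfrac{\eta\mu}{2\epsilon}\|\bar{b}^t\|^2$ simultaneously eats into the negative coefficient $-\tfrac{\eta}{2}$ on $\|\nabla f(\bar{x}^t)\|^2$ and contributes a $\mathbb{E}\|B^t\|_F^2$ term of order $\eta\mu/\epsilon$, which is first order in $\eta$ and therefore cannot be dominated by the stated $\tfrac{L\mu^2\eta^2}{2n}\mathbb{E}\|B^t\|_F^2$ for small $\eta$. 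Carried to completion, your argument proves a weaker inequality with degraded coefficients (or an additional $O(\eta\mu)\,\mathbb{E}\|B^t\|_F^2$ term), not the lemma as stated; to obtain the exact statement you would have to either justify the independence implicitly assumed by the paper or restate the lemma with modified constants.
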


\begin{proof}
	From the definition of $X^{t+1}$, we have
 \begingroup
\allowdisplaybreaks

	\begin{align*}
    &X^{t+1} = WX^t - \eta [G^t+\mu B^t] - \eta \delta^{t} \\
    \implies & \Bar{x}^{t+1} = \Bar{x}^t - \eta \Bar{g}^t - \eta \mu \Bar{b}^t - \eta \Bar{\delta}^{t} \hspace{4mm} 
    \end{align*}
    
    using L-smoothness assumption 
    \begin{align*}
    \mathbb{E} f(\Bar{x}^{t+1}) & \leq \mathbb{E} f(\Bar{x}^{t}) + \mathbb{E} \langle \nabla f(\Bar{x}^t), \Bar{x}^{t+1}-\Bar{x}^t \rangle + \frac{L}{2} \mathbb{E} \|\Bar{x}^{t+1}-\Bar{x}^t\|^2 \\
    & =  \mathbb{E} f(\Bar{x}^{t}) + \mathbb{E} \langle \nabla f(\Bar{x}^t), -\eta \Bar{g}^t - \eta \Bar{\delta}^{t} - \eta \mu \Bar{b}^t\rangle + \frac{L\eta^2}{2} \mathbb{E} \|\Bar{g}^t + \Bar{\delta}^{t} + \mu \Bar{b}^t \|^2 \\
    \end{align*}

    \begin{align*}
    \mathbb{E} f(\Bar{x}^{t+1}) =  \mathbb{E} f(\Bar{x}^{t}) - \eta \mathbb{E} \langle \nabla f(\Bar{x}^t),  \mathbb{E}[\Bar{g}^t] \rangle -\underbrace{ \eta \mathbb{E} \langle \nabla f(\Bar{x}^t),  \Bar{\delta}^{t} \rangle }_{Term(A)} - \underbrace{ \eta \mu  \mathbb{E} \langle \nabla f(\Bar{x}^t),\Bar{b}^t \rangle }_{Term(B)}+  \frac{L \eta^2}{2} \mathbb{E} \|\frac{1}{n} \sum_{i=1}^n \nabla F_i(x_i^t)+  \Bar{\delta}^{t} + \mu \Bar{b}^t \|^2
    \end{align*}

    Term (A):
\\
Taking the expectation of term (A) w.r.t noise, we get
\begin{flalign*}
    \E[A] = 0
\end{flalign*}

Term (B):
\begin{align*}
B &= \eta \mu  \mathbb{E} \langle \nabla f(\Bar{x}^t),\Bar{b}^t \rangle \\
&= 0 \ (\because \E[\Bar{b}^t]=0  \ \text{from Lemma~\ref{lemma1}})
&
\end{align*}

    Now,     
    \begin{align*}
    \mathbb{E} f(\Bar{x}^{t+1}) =  \mathbb{E} f(\Bar{x}^{t}) \underbrace{- \eta \frac{1}{n} \sum_{i=1}^n \mathbb{E} \langle \nabla f(\Bar{x}^t),  \nabla f_i(x_i^t) \rangle}_{Term(C)} \underbrace{ + \frac{L \eta^2}{2} \mathbb{E} \|\frac{1}{n} \sum_{i=1}^n (\nabla F_i(x_i^t) + \Bar{\delta}^{t}\|^2}_{Term(D)}\underbrace{ + \frac{L \eta^2}{2} \mathbb{E} \|\mu \Bar{b}^t\|^2}_{Term(E)} \\    
    \end{align*}   
    \\

    Term (C):
    \begin{align*}    
    C &= - \eta \frac{1}{n} \sum_{i=1}^n \mathbb{E} \langle \nabla f(\Bar{x}^t),  \nabla f_i(x_i^t) \rangle \\
    &= - \eta \mathbb{E} \langle \nabla f(\Bar{x}^t),   \frac{1}{n} \sum_{i=1}^n \nabla f_i(x_i^t) \rangle \\
    & \overset{(a)}{=} - \frac{\eta}{2} \mathbb{E} \| \nabla f(\Bar{x}^t)\|^2 - \frac{\eta}{2} \mathbb{E} \|\frac{1}{n} \sum_{i=1}^n  \nabla f_i(x_i^t)\|^2 + \frac{\eta}{2} \mathbb{E} \|\frac{1}{n} \sum_{i=1}^n (\nabla f_i(x_i^t)- \nabla f_i(\Bar{x}^t) )\|^2 \\
    &  \overset{(b)}{\leq} -\frac{\eta}{2} \mathbb{E} \|\frac{1}{n} \sum_{i=1}^n  \nabla f_i(x_i^t)\|^2 - \frac{\eta}{2} \mathbb{E} \| \nabla f(\Bar{x}^t)\|^2 + \frac{L^2 \eta}{2n} \sum_{i=1}^n\mathbb{E} \|x_i^t-\Bar{x}^t\|^2\\ 
    &  \leq  -\frac{\eta}{2} \mathbb{E} \|\frac{1}{n} \sum_{i=1}^n  \nabla f_i(x_i^t)\|^2 - \frac{\eta}{2} \mathbb{E} \| \nabla f(\Bar{x}^t)\|^2+ \frac{ L^2 \eta}{2n} \ \mathbb{E} \|X^t-\Bar{X}^t\|_F^2\\
    \end{align*}
     (a) uses the fact that $-2\langle a,b \rangle = -\|a\|^2-\|b\|^2+\|a-b\|^2$. (b) uses L-smoothness. 
    \\
    \\
    Term (D):

    \begin{align*}    
    D &= \frac{L \eta^2}{2}\E\left[\|\frac{1}{n} \sum_{i=1}^n \nabla F_i(x_i^t) + \Bar{\delta}^{t}\|^2 \right] \\
    &= \frac{L \eta^2}{2} \E\left[ \|\frac{1}{n} \sum_{i=1}^n (\nabla F_i(x_i^t) \|^2 + \|\frac{1}{n} \sum_{i=1}^n {\delta_i}^{t} \|^2 + 2\langle\frac{1}{n}\sum_{i=1}^{n}\nabla F_i(x_i^t) , \frac{1}{n}\sum_{i=1}^{n}\delta_{t,i} \rangle \right] \\  
    &= \frac{L \eta^2}{2}\left[ \E[\|\frac{1}{n}\sum_{i=1}^{n} \nabla F_i(x_i^t)\|^2] +  \frac{1}{n}\sum_{i=1}^{n}D^2_{t,i}\right] \\
    \nonumber
    & = \frac{L \eta^2}{2}\E[\|\frac{1}{n}\sum_{i=1}^{n} [\nabla F_i(x_i^t) - \nabla f_i(x_i^t) +\nabla f_i(x_i^t)]\|^2] + \frac{L \eta^2}{2}\frac{1}{n}\sum_{i=1}^{n}D^2_{t,i}
    \\
    \nonumber
    & = \frac{L \eta^2}{2}\E \Big[ \|\frac{1}{n}\sum_{i=1}^{n} [\nabla F_i(x_i^t) - \nabla  f_i(x_i^t) ]\|^2 + \|\frac{1}{n}\sum_{i=1}^{n}\nabla f_i(x_i^t)\|^2 \\
    \nonumber
    &\hspace{1cm} + 2\langle\frac{1}{n}\sum_{i=1}^{n }\nabla F_i(x_i^t) - \nabla  f_i(x_i^t),\frac{1}{n}\sum_{i=1}^{n}  \nabla  f_i(x_i^t) \rangle \Big] + \frac{L \eta^2}{2}\frac{1}{n}\sum_{i=1}^{n}D^2_{t,i}
    \\
    & \overset{(a)}\leq \frac{L \eta^2}{2}\Big(\frac{\sigma^2}{n} + \E[\|\frac{1}{n}\sum_{i=1}^{n} \nabla f_i(x_i^t)\|^2]\Big) + \frac{L \eta^2}{2}\frac{1}{n}\sum_{i=1}^{n}D^2_{t,i}
\end{align*}
(a) results by using  Assumption~\ref{as2}.
\\
\\
\\
    Term (E):
    \begin{align*}
      E &=  \frac{L \eta^2}{2} \mathbb{E} \|\mu \Bar{b}^t\|^2 \\
      &= \frac{L \mu^2 \eta^2}{2 } \mathbb{E}  \| \frac{1}{n}\sum_{i=1}^n  {b}^t_i\|^2 \\
    & \leq  \frac{L \mu^2 \eta^2}{2 } \frac{1}{n}\sum_{i=1}^n \mathbb{E}  \|  {b}^t_i\|^2    \\
    & \leq  \frac{L \mu^2 \eta^2}{2 n} \mathbb{E} \|B^t\|_F^2   
    \end{align*}
\\

Now putting together Term C, Term D and Term E:
    \begin{align*}
    \mathbb{E} f(\Bar{x}^{t+1}) &\le  \mathbb{E} f(\Bar{x}^{t}) \underbrace{- \eta \frac{1}{n} \sum_{i=1}^n \mathbb{E} \langle \nabla f(\Bar{x}^t),  \nabla f_i(x_i^t) \rangle}_{Term(C)} \underbrace{ + \frac{L \eta^2}{2} \mathbb{E} \|\frac{1}{n} \sum_{i=1}^n (\nabla F_i(x_i^t) + \Bar{\delta}^{t}\|^2}_{Term(D)} \underbrace{ + \frac{L \eta^2}{2} \mathbb{E} \|\mu \Bar{b}^t\|^2}_{Term(E)}\\
    &\le  \mathbb{E} f(\Bar{x}^{t}) -\frac{\eta}{2} \mathbb{E} \|\frac{1}{n} \sum_{i=1}^n  \nabla f_i(x_i^t)\|^2 - \frac{\eta}{2} \mathbb{E} \| \nabla f(\Bar{x}^t)\|^2+ \frac{ L^2 \eta}{2n} \ \mathbb{E} \|X^t-\Bar{X}^t\|_F^2 \\ 
    & \hspace{2cm} +  \frac{L \eta^2}{2}\Big(\frac{\sigma^2}{n} + \E[\|\frac{1}{n}\sum_{i=1}^{n} \nabla f_i(x_i^t)\|^2]\Big) + \frac{L \eta^2}{2}\frac{1}{n}\sum_{i=1}^{n}D^2_{t,i} + \frac{L \mu^2 \eta^2}{2 n} \mathbb{E} \|B^t\|_F^2  \\
    & \le \mathbb{E} f(\Bar{x}^{t}) + \frac{L \eta^2 \sigma^2}{2n} + (\frac{L\eta^2}{2}-\frac{\eta}{2}) \ \mathbb{E} \|\frac{1}{n} \sum_{i=1}^n  \nabla f_i(x_i^t)\|^2 - \frac{\eta}{2} \ \mathbb{E} \| \nabla f(\Bar{x}^t)\|^2  \\
    &  \hspace{4mm} + \frac{L^2 \eta}{2n} \ \mathbb{E} \|X^t-\Bar{X}^t\|_F^2 + \frac{L \eta^2}{2}\frac{1}{n}\sum_{i=1}^{n}D^2_{t,i} + \frac{L \mu^2 \eta^2}{2 n} \mathbb{E} \|B^t\|_F^2  \\
    & \overset{(a)}\leq \mathbb{E} f(\Bar{x}^{t}) + \frac{L \eta^2 \sigma^2}{2n}  -\frac{3\eta}{8} \ \mathbb{E} \|\frac{1}{n} \sum_{i=1}^n  \nabla f_i(x_i^t)\|^2 - \frac{\eta}{2} \ \mathbb{E} \| \nabla f(\Bar{x}^t)\|^2 \\
    &  \hspace{4mm} + \frac{L^2 \eta}{2n} \ \mathbb{E} \|X^t-\Bar{X}^t\|_F^2 + \frac{L \eta^2}{2} \frac{1}{n}\sum_{i=1}^{n}D^2_{t,i} + \frac{L \mu^2 \eta^2}{2 n} \mathbb{E} \|B^t\|_F^2 
    \end{align*}   
    \endgroup
(a) follows from the assumption that \textbf{$\eta \leq \frac{1}{4L}$}
\end{proof}

Now, we proceed to obtain a bound on the consensus error. 

\begin{lemma} 
\label{lemma3}
Given assumptions 1-3 and $\eta \leq \frac{\rho}{7L}$, we have  \\
 $\frac{1}{n} \mathbb{E} \|X^{t+1} - \Bar{X}^{t+1}\|_F^2 \le \frac{1-\rho/4}{n} \mathbb{E} \|X^t -\Bar{X}^t\|_F^2 +{4 \eta^2 \sigma^2}+ \frac{12 \eta^2 \zeta^2}{\rho}+\frac{6\eta^2\mu^2}{n\rho} \mathbb{E} \|B^t\|_F^2 + \frac{4\eta^2}{n\rho} \sum_{i=1}^{n}{D}^{2}_{t,i}$
 \end{lemma}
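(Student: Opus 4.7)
The plan is to run the standard consensus-error recursion: express $X^{t+1} - \Bar{X}^{t+1}$ in terms of the mixed error $(X^t - \Bar{X}^t)W$ plus a perturbation, invoke Assumption~\ref{as3} to contract the mixed term, and then control the perturbation using smoothness, BCD, bounded variance, and the noise model.

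First, using the doubly-stochastic structure of $W$ (so that $\Bar{X}^t W = \Bar{X}^t$ and $W$ commutes with the averaging) together with the projection $\Pi := I - \tfrac{1}{n}\mathbbm{1}\mathbbm{1}^\top$, one obtains the one-step identity
\[
X^{t+1} - \Bar{X}^{t+1} = (X^t - \Bar{X}^t)W - \eta(G^t + \mu B^t + \delta^t)\Pi.
\]
I would then apply $\|A + B\|_F^2 \le (1+\gamma)\|A\|_F^2 + (1+1/\gamma)\|B\|_F^2$ with $\gamma = \rho/2$: by Assumption~\ref{as3} the first piece contracts to $(1+\rho/2)(1-\rho)\|X^t - \Bar{X}^t\|_F^2 \le (1-\rho/2)\|X^t - \Bar{X}^t\|_F^2$, and the residual picks up the factor $(1+1/\gamma) \le 3/\rho$.

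Next, decompose the residual as
\[
(G^t + \mu B^t + \delta^t)\Pi = (\nabla F(X^t) - \nabla F(\Bar{X}^t))\Pi + \nabla F(\Bar{X}^t)\Pi + \mu B^t\Pi + (G^t - \nabla F(X^t))\Pi + \delta^t\Pi,
\]
where $\nabla F(X)$ has columns $\nabla f_i(x_i)$. Assumption~\ref{as1} yields $\|\nabla F(X^t) - \nabla F(\Bar{X}^t)\|_F^2 \le L^2\|X^t - \Bar{X}^t\|_F^2$; Assumption~\ref{as5} gives $\|\nabla F(\Bar{X}^t)\Pi\|_F^2 = \sum_i \|\nabla f_i(\Bar{x}^t) - \nabla f(\Bar{x}^t)\|^2 \le n\zeta^2$; and the final two pieces are conditionally zero-mean (Assumptions~\ref{as2}, \ref{as4}), with squared Frobenius norms bounded in expectation by $n\sigma^2$ and $\sum_i D_{t,i}^2$ respectively. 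A multi-way $\|\sum_k a_k\|_F^2 \le k\sum_k \|a_k\|_F^2$ expansion then distributes the $3/\rho$ factor across the pieces, after which $\|\cdot\,\Pi\|_F \le \|\cdot\|_F$ removes the projection where convenient.

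Assembling everything produces a bound of the form $\bigl(1 - \rho/2 + c L^2\eta^2/\rho\bigr)\mathbb{E}\|X^t - \Bar{X}^t\|_F^2$ plus the $\zeta^2$, $\mu^2\|B^t\|_F^2$, $\sigma^2$, and $D^2$ contributions. Invoking $\eta \le \rho/(7L)$ forces $c L^2\eta^2/\rho \le \rho/4$, which collapses the coefficient to $1 - \rho/4$; dividing through by $n$ recovers the stated inequality. The main obstacle I expect is the bookkeeping of constants: deciding how many ways to split the residual, and whether to pull the zero-mean pieces $G^t - \nabla F(X^t)$ and $\delta^t$ outside Young's inequality (exploiting independence, paying no $1/\rho$) or leave them inside it (uniform treatment, but paying a $1/\rho$), so as to land on the precise constants $4$, $12$, $6$, $4$ in the statement while keeping the absorption step compatible with the step size $\eta \le \rho/(7L)$.
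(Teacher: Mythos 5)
Your proposal follows essentially the same route as the paper's proof: the same one-step consensus recursion, Young's inequality with parameter $\rho/2$ combined with Assumption~\ref{as3} to get the $(1-\rho/2)$ contraction and the $3/\rho$ residual factor, the same decomposition of the residual into a smoothness piece, a BCD piece, the bias $\mu B^t$, and the two zero-mean stochastic pieces pulled outside Young's inequality via vanishing cross-terms, and the same final absorption of the $L^2\eta^2/\rho$ term using $\eta \le \rho/(7L)$ (indeed $12/49 < 1/4$). The constant-bookkeeping choices you flag as the open question are resolved in the paper exactly as you anticipate, so the plan is sound and matches.
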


\begin{proof}
	Starting from the update step \ref{eq:appendix_bias_matrix_form}
 \begingroup
\allowdisplaybreaks
	\begin{align*}
    \frac{1}{n} \mathbb{E} \|X^{t+1} - \Bar{X}^{t+1}\|_F^2  = & \frac{1}{n} \mathbb{E} \|WX^t - \eta [G^t+\mu B^t+ \delta^t] -(\Bar{X}^t-\eta \Bar{G}^t - \eta \Bar{\delta}^t)\|_F^2\\
    = & \frac{1}{n} \mathbb{E} \|WX^t -\Bar{X}^t - \eta (G^t-\Bar{G}^t)-\eta \mu B^t - \eta (\delta^t-\Bar{\delta}^t)\|_F^2\\        
    = & \frac{1}{n} \mathbb{E} \|WX^t -\Bar{X}^t - \eta (G^t-\Bar{G}^t+\mathbb{E}[G^t]-\mathbb{E}[G^t]+\mathbb{E}[\Bar{G}^t]-\mathbb{E}[\Bar{G}^t])\\
    &\hspace{8mm} -\eta \mu B^t - \eta (\delta^t -\Bar{\delta}^t)\|_F^2\\
    = & \frac{1}{n} \mathbb{E} \|\underbrace{WX^t -\Bar{X}^t - \eta (\mathbb{E}[G^t]-\mathbb{E}[\Bar{G}^t])-\eta \mu B^t }_{(A)} \underbrace{-\eta (G^t - \mathbb{E}[G^t])}_{(B)} \\ &\hspace{8mm} +\underbrace{\eta(\Bar{G}^t-\mathbb{E}[\Bar{G}^t])}_{(C)} - \underbrace{ \eta (\delta^t -\Bar{\delta}^t)}_{(D)} \|_F^2\\
    \overset{(a)}{=}& \frac{1}{n} \Big[ \mathbb{E} \|WX^t -\Bar{X}^t - \eta (\mathbb{E}[G^t]-\mathbb{E}[G^t] -\eta \mu B^t  \|_F^2  \\
    &\hspace{8mm}+ \mathbb{E} \|\eta (G^t-\mathbb{E}[G^t])\|_F^2 + \mathbb{E} \|\eta (\Bar{G}^t-\mathbb{E}[\Bar{G}^t])\|_F^2 + \mathbb{E} \|\eta ( \delta^t -\Bar{\delta}^t)\|_F^2\Big]\\
    \overset{(b)}{\leq} & \frac{1}{n} \mathbb{E} \|WX^t -\Bar{X}^t - \eta (\mathbb{E}[G^t]-\mathbb{E}[\Bar{G}^t])-\eta \mu B^t\|_F^2+ {4 \eta^2 \sigma^2} +\frac{\eta^2}{n} \mathbb{E} \|\delta^t-\Bar{\delta}^t\|_F^2\\
    \overset{(c)}{\leq} & \frac{1+\rho/2}{n} \mathbb{E} \|WX^t -\Bar{X}^t\|_F^2 + \frac{\eta^2 (1+2/\rho)}{n} \mathbb{E} \|\mathbb{E}[G^t]-\mathbb{E}[\Bar{G}^t]-\mu B^t\|_F^2\\
    &+{4 \eta^2 \sigma^2}+\frac{\eta^2}{n} \mathbb{E} \|\delta^t-\Bar{\delta}^t\|_F^2\\
    \overset{(d)}{\leq} & \frac{(1-\rho)(1+\rho/2)}{n} \mathbb{E} \|X^t -\Bar{X}^t\|_F^2 + \frac{3\eta^2}{n\rho} \mathbb{E} \|\mathbb{E}[G^t]-\mathbb{E}[\Bar{G}^t]-\mu B^t )\|_F^2\\
    &+{4 \eta^2 \sigma^2}+\frac{\eta^2}{n} \mathbb{E} \|\delta^t-\Bar{\delta}^t\|_F^2\\
    \leq & \frac{(1-\rho)(1+\rho/2)}{n} \mathbb{E} \|X^t -\Bar{X}^t\|_F^2 +{4 \eta^2 \sigma^2} + \frac{6\eta^2}{n\rho} \mathbb{E} \|\mathbb{E}[G^t]-\mathbb{E}[\Bar{G}^t]\|_F^2\\
    &+ \frac{6\eta^2}{n\rho} \mathbb{E} \|\mu B^t\|_F^2+\frac{\eta^2}{n} \mathbb{E} \|\delta^t-\Bar{\delta}^t\|_F^2\\
    \leq & \frac{(1-\rho/2)}{n} \mathbb{E} \|X^t -\Bar{X}^t\|_F^2 +{4 \eta^2 \sigma^2}+ \frac{6\eta^2}{n\rho} \mathbb{E} \|\mathbb{E}[G^t]-\nabla f(\Bar{x}^t)\|_F^2\\
    &+ \frac{6\eta^2\mu^2}{n\rho} \mathbb{E} \|B^t\|_F^2+ \underbrace{ \frac{\eta^2}{n\rho} \mathbb{E} \|\delta^t-\Bar{\delta}^t\|_F^2}_{Term (A)}\\
    \end{align*}

Term (A):

\begin{align*}
    A &= \ \frac{\eta^2}{n\rho} \E\left[\big\| {\delta}^{t} -\Bar{\delta}^{t} \big\|^2_{F}\right]
    \\
    & \leq \frac{2 \eta^2}{n \rho}\E\left[\big\| {\delta}^{t} \|^2_{F}\right] + \frac{2 \eta^2}{n \rho}\E\left[\big\|\Bar{\delta}^{t} \big\|^2_{F}\right]
    \\
    & \leq \frac{2 \eta^2}{n \rho} \sum_{j=1}^{n} \E\left[\big\|{\delta}^{t} \|^2\right] + \frac{ 2 \eta^2}{n \rho}\sum_{j=1}^{n}\E\left[\big\| \Bar{\delta}_{j}^{t} \big\|^2\right]
    \\
    & \leq \frac{2 \eta^2}{n \rho} \sum_{i=1}^{n}{D}^{2}_{t,i} + \frac{2 \eta^2} {n \rho} n \E\left[\big\| \frac{1}{n}\sum_{i=1}^{n}\delta^{t}_{i} \|^2\right]
    \\
    & \leq \frac{2 \eta^2}{n \rho} \sum_{i=1}^{n}{D}^{2}_{t,i} + \frac{2 \eta^2}{n \rho}\sum_{i=1}^{n}\E\left[\big\|\delta_{i}^{t} \big\|^2\right]
    \\
    & \leq \frac{2 \eta^2 }{n \rho} \sum_{i=1}^{n}{D}^{2}_{t,i} + \frac{2 \eta^2}{n \rho}\sum_{i=1}^{n}{D}^{2}_{t,i}
    \\
    & \leq \frac{4 \eta^2}{n \rho}\sum_{i=1}^{n}{D}^{2}_{t,i}
    \\
\end{align*}

Putting Term (A) back:
    \begin{align*}
    \frac{1}{n} \mathbb{E} \|X^{t+1} - \Bar{X}^{t+1}\|_F^2  \leq & \frac{(1-\rho/2)}{n} \mathbb{E} \|X^t -\Bar{X}^t\|_F^2 +{4 \eta^2 \sigma^2}+ \frac{6\eta^2\mu^2}{n\rho} \mathbb{E} \|B^t\|_F^2 + \frac{4\eta^2}{n\rho} \sum_{i=1}^{n}{D}^{2}_{t,i} \\
    &+ \frac{6\eta^2}{n\rho} \sum_{i=1}^n \mathbb{E} \|\nabla f_i(x_i^t) \pm \nabla f_i(\Bar{x}^t)-\nabla f(\Bar{x}^t)\|_F^2\\
    \overset{(e)}{\leq} & \frac{(1-\rho/2)}{n} \mathbb{E} \|X^t -\Bar{X}^t\|_F^2 +{4 \eta^2 \sigma^2}+ \frac{12 \eta^2 \zeta^2}{\rho}+\frac{6\eta^2\mu^2}{n\rho} \mathbb{E} \|B^t\|_F^2\\
    &+\frac{4\eta^2}{n\rho} \sum_{i=1}^{n}{D}^{2}_{t,i} + \frac{12\eta^2}{n\rho} \sum_{i=1}^n \mathbb{E} \|\nabla f_i(x_i^t) - \nabla f_i(\Bar{x}^t)\|_F^2\\
    \overset{(f)}{\leq} & \frac{(1-\rho/2)}{n} \mathbb{E} \|X^t -\Bar{X}^t\|_F^2 +{4 \eta^2 \sigma^2}+ \frac{12 \eta^2 \zeta^2}{\rho}+\frac{6\eta^2\mu^2}{n\rho} \mathbb{E} \|B^t\|_F^2\\
    &+\frac{4\eta^2}{n\rho} \sum_{i=1}^{n}{D}^{2}_{t,i} + \frac{12\eta^2 L^2}{n\rho} \sum_{i=1}^n \mathbb{E} \|x_i^t - \Bar{x}^t\|_F^2\\
    \le & \Big(\frac{1-\rho/2}{n}+ \frac{12\eta^2 L^2}{n\rho}\Big) \mathbb{E} \|X^t -\Bar{X}^t\|_F^2 +{4 \eta^2 \sigma^2}+ \frac{12 \eta^2 \zeta^2}{\rho}\\
    &+\frac{6\eta^2\mu^2}{n\rho} \mathbb{E} \|B^t\|_F^2 +\frac{4\eta^2}{n\rho} \sum_{i=1}^{n}{D}^{2}_{t,i} \\
     \overset{(g)}{\leq} & \frac{1-\rho/4}{n} \mathbb{E} \|X^t -\Bar{X}^t\|_F^2 +{4 \eta^2 \sigma^2}+ \frac{12 \eta^2 \zeta^2}{\rho}+\frac{6\eta^2\mu^2}{n\rho} \mathbb{E} \|B^t\|_F^2 \\
     &+ \frac{4\eta^2}{n\rho} \sum_{i=1}^{n}{D}^{2}_{t,i} \\
    \end{align*}
    \endgroup
    (a) Expanding using the formula $\mathbb{E}\|A + B + C\|_F^2 = \mathbb{E}\|A\|_F^2 + \mathbb{E}\|B\|_F^2 + \mathbb{E}\|C\|_F^2 + 2\mathbb{E}\langle A, B \rangle_F + 2\mathbb{E}\langle A, C \rangle_F + 2\mathbb{E}\langle B, C \rangle_F$. Cross-term of the expectation of random variables are zero
    (b) Results from the Assumption~\ref{as2}
    (c) follows from the fact that $\|a+b\|^2 \leq (1+\alpha) \|a\|^2 +(1+\frac{1}{\alpha})\|b\|^2 \hspace{2mm} \forall \alpha>0$ and let $\alpha=\frac{\rho}{2}$.
(d) From Assumption~\ref{as3}, $\|ZW-\Bar{Z}\|_F^2 \leq (1-\rho)\|Z-\Bar{Z}\|_F^2$ and $1+\frac{2}{\rho} \leq \frac{3}{\rho}$.
(e) Results from the Assumption~\ref{as4} 
(f) uses L-smoothness condition.
(g) Assumption  that $\eta \leq \frac{\rho}{7L}$
\end{proof}

The next step is to find an upper bound for the bias term $\mathbb{E} \|B^t\|_F^2$. 
\begin{lemma} 
\label{lemma4}
Given assumptions 1-3 and $\frac{\mu}{1-\mu} \leq \frac{\rho}{42}$, we have  \\
 $ \frac{6 \eta^2 \mu^2}{\rho n (1-\mu)} \mathbb{E} \|B^{t+1}\|_F^2 \leq  \Big(\frac{6 \eta^2 \mu^2}{n \rho (1-\mu)} - \frac{6 \eta^2 \mu^2}{n \rho} \Big) \mathbb{E}\| B^t\|_F^2  +\frac{\rho}{8n}\mathbb{E}\|X^{t}-\Bar{X}^t\|_F^2  +  \frac{\eta^2 \rho \sigma^2(1-\mu)}{8}  +  \frac{\eta^2 \rho   }{8 n} \sum_{i=1}^{n}{D}^{2}_{t,i} +\frac{ \eta^2 \rho \zeta^2}{ 8} $.
\end{lemma}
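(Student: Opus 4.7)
The plan is to derive a one-step recursion bounding $\mathbb{E}\|B^{t+1}\|_F^2$ in terms of $\mathbb{E}\|B^t\|_F^2$, the consensus error $\mathbb{E}\|X^t-\bar{X}^t\|_F^2$, and the variance, heterogeneity, and channel-noise constants. Starting from the defining identity $B^{t+1} = -\frac{1}{\eta}[(2W-I)(X^{t+1}-X^t) + \eta G^t]$ and substituting the primal update $X^{t+1}-X^t = (W-I)X^t - \eta(G^t + \mu B^t + \delta^t)$, after simplifying $(2W-I)-I = 2(W-I)$, I arrive at the algebraic identity
\[
B^{t+1} \;=\; \mu(2W-I)B^t \;-\; \tfrac{1}{\eta}(2W-I)(W-I)X^t \;+\; 2(W-I)G^t \;+\; (2W-I)\delta^t.
\]
The first term is the momentum-like contribution carried forward from $B^t$; the remaining three pieces are what will eventually produce the consensus-error, gradient-variance/heterogeneity, and channel-noise terms on the right-hand side of the statement.

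The central step is a Young-inequality split $\|a+b\|_F^2 \le (1+\alpha)\|a\|_F^2 + (1+\alpha^{-1})\|b\|_F^2$ applied with $\alpha = (1-\mu)/\mu$, so that $(1+\alpha)\mu^2 = \mu$ and $(1+\alpha^{-1}) = 1/(1-\mu)$. Combined with the mixing-matrix contractivity from Assumption~\ref{as3} used to bound $\mathbb{E}\|(2W-I)B^t\|_F^2$ against $\mathbb{E}\|B^t\|_F^2$, this produces the clean contraction coefficient $\mu$ on $\mathbb{E}\|B^t\|_F^2$ with a remainder term inflated by $1/(1-\mu)$. The smallness hypothesis $\mu/(1-\mu) \le \rho/42$ is exactly what is needed to absorb any multiplicative loss incurred when the non-contractive part of $(2W-I)$ acts on $B^t$ into the leading $\mu\,\mathbb{E}\|B^t\|_F^2$, so the contraction coefficient $\mu$ survives unchanged.

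I would then bound the remainder $\mathbb{E}\|R^t\|_F^2$ by splitting it additively and applying Young's inequality once more. The term $\tfrac{1}{\eta}(2W-I)(W-I)X^t$ is first rewritten using $(W-I)\bar{X}^t = 0$ as $\tfrac{1}{\eta}(2W-I)(W-I)(X^t-\bar{X}^t)$ and bounded via Assumption~\ref{as3}, contributing the $\mathbb{E}\|X^t-\bar{X}^t\|_F^2$ piece. The gradient term $2(W-I)G^t$ is decomposed column-wise as $\nabla F_i(x_i^t) = (\nabla F_i - \nabla f_i) + (\nabla f_i(x_i^t) - \nabla f_i(\bar{x}^t)) + (\nabla f_i(\bar{x}^t) - \nabla f(\bar{x}^t)) + \nabla f(\bar{x}^t)$, with the four pieces handled respectively by Assumption~\ref{as2} (producing the $\sigma^2$ term), $L$-smoothness (folding back into a multiple of the consensus error), Assumption~\ref{as5} (producing the $\zeta^2$ term), and the annihilation of the common-direction component by $(W-I)$. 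The channel-noise term $(2W-I)\delta^t$ contributes $\tfrac{1}{n}\sum_i D_{t,i}^2$ via Assumption~\ref{as4}, and all cross terms vanish in expectation by the zero-mean independence of $\delta^t$ and of the stochastic-gradient fluctuations.

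Finally, the resulting one-step bound is multiplied through by the prefactor $\tfrac{6\eta^2\mu^2}{n\rho(1-\mu)}$, and the elementary identity
\[
\tfrac{6\eta^2\mu^2}{n\rho(1-\mu)}\cdot \mu \;=\; \tfrac{6\eta^2\mu^2}{n\rho(1-\mu)} \;-\; \tfrac{6\eta^2\mu^2}{n\rho}
\]
is used to rewrite the $B^t$ coefficient in the subtraction form stated in the lemma; this is the form needed for it to telescope cleanly against the corresponding $\tfrac{6\eta^2\mu^2}{n\rho}\mathbb{E}\|B^t\|_F^2$ term produced by Lemma~\ref{lemma3} when the three lemmas are combined. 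The main obstacle is bookkeeping: tracking constants through the repeated Young's steps so the numerical coefficients land exactly at $\rho/(8n)$, $(1-\mu)\eta^2\rho/8$, $\eta^2\rho/(8n)$, and $\eta^2\rho/8$. The hypothesis $\mu/(1-\mu) \le \rho/42$ is precisely the budget that allows each of these choices to be made simultaneously without the $\mu\,\mathbb{E}\|B^t\|_F^2$ contraction being polluted.
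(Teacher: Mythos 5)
Your proposal follows essentially the same route as the paper's proof: the identical algebraic identity for $B^{t+1}$ (your expansion $(2W-I)(W-I)X^t$ is in fact the cleaner form of what the paper writes as $[W(2W-I)-I]X^t$; both annihilate $\Bar{X}^t$), the same Young split with $\alpha=(1-\mu)/\mu$ to extract the contraction coefficient $\mu$, the same term-by-term treatment of the consensus, variance, heterogeneity, and channel-noise pieces, and the same final multiplication by $\tfrac{6\eta^2\mu^2}{n\rho(1-\mu)}$ with the telescoping-friendly rewriting of the $B^t$ coefficient. The only things left implicit are the explicit constant checks (e.g.\ $\tfrac{24(9+\eta^2L^2)\mu^2}{\rho(1-\mu)^2}\le\tfrac{\rho}{8}$), which in the paper also require $\eta\le\tfrac{\rho}{7L}$ in addition to $\tfrac{\mu}{1-\mu}\le\tfrac{\rho}{42}$.
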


\begin{proof}
	starting from the update step \ref{eq:appendix_bias_matrix_form}
 \begingroup
\allowdisplaybreaks
	\begin{align*}
    B^{t+1}  =& -\frac{1}{\eta} [(2W-I) (X^{t+1}-X^{t}) +\eta G^{t}] \\
    =& -\frac{1}{\eta} [(2W-I) (WX^{t}-\eta G^t-\eta \mu B^t-\eta \delta^t-X^{t}) +\eta G^{t}] \\
    =& -\frac{1}{\eta} [W(2W-I)-I] X^{t}+2(W-I) G^t+ \mu (2W-I) B^t+ (2W-I) \delta^t] \\
    \end{align*}
    Now, 
    \begin{align*}
     \frac{1}{n} \mathbb{E}\|B^{t+1}\|_F^2 =& \frac{1}{n} \mathbb{E}\|-\frac{1}{\eta} (W(2W-I)-I) X^{t}+2(W-I) G^t+ \mu (2W-I) B^t + (2W-I) \delta^t\|_F^2 \\
     =& \frac{1}{n} \mathbb{E}\|-\frac{1}{\eta} (W(2W-I)-I) X^{t}+2(W-I)(G^t-\Bar{G}^t)+ \mu (2W-I) B^t + (2W-I) \delta^t\|_F^2 \\
     =& \frac{1}{n} \mathbb{E}\|-\frac{1}{\eta} (W(2W-I)-I) X^{t}+2(W-I)\mathbb{E}[G^t-\Bar{G}^t]+ \mu (2W-I) B^t \|_F^2 \\
     &+ \frac{1}{n}\mathbb{E}\|(2W-I) \delta^t\|_F^2 +\frac{1}{n} \mathbb{E}\|2(W-I)(G^t - \mathbb{E}[G^t]-(\Bar{G}^t-\mathbb{E}[\Bar{G}^t]))\|_F^2 \\
     \leq & \frac{1}{n} \mathbb{E}\|\frac{1}{\eta} (I-W(2W-I)) X^{t}+2(W-I)\mathbb{E}[G^t-\Bar{G}^t]+ \mu (2W-I) B^t \|_F^2 \\
     &+ \frac{1}{n}\mathbb{E}\| (2W-I) \delta^t\|_F^2 +{8\sigma^2}\\
     \leq & \frac{1}{n} \mathbb{E}\|\frac{1}{\eta} (I-W(2W-I)) X^{t}+2(W-I)\mathbb{E}[G^t-\Bar{G}^t]+ \mu (2W-I) B^t\|^2_F \\
     &+ \frac{1}{n} \mathbb{E}\|(2W-I) \delta^t\|_F^2 +{8\sigma^2}\\
      \overset{(a)}{\leq} & \frac{1}{n}\Big(1+\frac{1-\mu}{\mu}\Big) \mathbb{E}\|\mu (2W-I) B^t\|_F^2 +{8 \sigma^2} + \frac{1}{n} \mathbb{E}\|\delta^t\|_F^2\\
      &+\frac{1}{n}\Big(1+\frac{\mu}{1-\mu}\Big) \mathbb{E}\|\frac{1}{\eta} (I-W(2W-I)) X^{t}+2(W-I)\mathbb{E}[G^t-\Bar{G}^t]\|_F^2\\
      \leq & \frac{\mu}{n} \mathbb{E}\| B^t\|_F^2 + {8\sigma^2}+ \frac{1}{n} \sum_{i=1}^{n}{D}^{2}_{t,i} +\frac{2}{n(1-\mu)}\mathbb{E}\|\mathbb{E}[G^t-\Bar{G}^t]\|_F^2\\
      &+\frac{2}{n\eta^2(1-\mu)}\mathbb{E}\|(I-W(2W-I)) X^{t}\|_F^2\\
      = & \frac{(1-(1-\mu))}{n} \mathbb{E}\| B^t\|_F^2  +{8\sigma^2}+ \frac{1}{n}\sum_{i=1}^{n}{D}^{2}_{t,i}+ \frac{2}{n(1-\mu)}\mathbb{E}\|\mathbb{E}[G^t-\Bar{G}^t]\|_F^2\\
      &+\frac{2}{n\eta^2(1-\mu)}\mathbb{E}\|(2W+I)(I-W) X^{t}\|_F^2\\
       \overset{(b)}{\leq} & \frac{(1-(1-\mu))}{n} \mathbb{E}\| B^t\|_F^2 +\frac{8\sigma^2} {n}+ \frac{1}{n} \sum_{i=1}^{n}{D}^{2}_{t,i}+\frac{2}{n(1-\mu)}\mathbb{E}\|\mathbb{E}[G^t-\Bar{G}^t]\|_F^2\\
      &+\frac{18}{n\eta^2(1-\mu)}\mathbb{E}\|(I-W) X^{t}\|_F^2\\
      = & \frac{(1-(1-\mu))}{n} \mathbb{E}\| B^t\|_F^2  +{8\sigma^2} + \frac{1}{n} \sum_{i=1}^{n}{D}^{2}_{t,i}+\frac{2}{n(1-\mu)}\mathbb{E}\|\mathbb{E}[G^t-\Bar{G}^t]\|_F^2\\
      &+\frac{18}{n\eta^2(1-\mu)}\mathbb{E}\|(I-W)( X^{t}-\Bar{X}^t)\|_F^2\\
       \leq & \frac{(1-(1-\mu))}{n} \mathbb{E}\| B^t\|_F^2 +{8\sigma^2}  + \frac{1}{n} \sum_{i=1}^{n}{D}^{2}_{t,i}  +\frac{36}{n\eta^2(1-\mu)}\mathbb{E}\|X^{t}-\Bar{X}^t\|_F^2\\
      &+\frac{2}{n(1-\mu)}\mathbb{E}\|\mathbb{E}[G^t] \pm \nabla f(\Bar{x}^t)-\mathbb{E}[\Bar{G}^t]\|_F^2\\
      \leq & \frac{(1-(1-\mu))}{n} \mathbb{E}\| B^t\|_F^2 +{8\sigma^2} + \frac{1}{n} \sum_{i=1}^{n}{D}^{2}_{t,i} +\frac{36}{n\eta^2(1-\mu)}\mathbb{E}\|X^{t}-\Bar{X}^t\|_F^2\\
      &+\frac{8 \zeta^2}{1-\mu}+ \frac{4L^2}{n(1-\mu)}\mathbb{E}\|X^{t}-\Bar{X}^t\|_F^2\\
       = & \frac{(1-(1-\mu))}{n} \mathbb{E}\| B^t\|_F^2 +{8\sigma^2} + \frac{1}{n} \sum_{i=1}^{n}{D}^{2}_{t,i}  +\frac{4(9+\eta^2 L^2)}{n\eta^2(1-\mu)}\mathbb{E}\|X^{t}-\Bar{X}^t\|_F^2+\frac{8 \zeta^2}{1-\mu}\\
    \end{align*}
    Multiplying both sides with $\frac{6 \eta^2 \mu^2}{\rho (1-\mu)}$
    \begin{align*}
     \frac{6 \eta^2 \mu^2}{n \rho (1-\mu)}\mathbb{E}\|B^{t+1}\|_F^2 \leq & \Big(\frac{6 \eta^2 \mu^2}{n \rho (1-\mu)} - \frac{6\eta^2 \mu^2}{n \rho} \Big) \mathbb{E}\| B^t\|_F^2  +\frac{24(9+\eta^2 L^2)\mu^2}{n\rho(1-\mu)^2}\mathbb{E}\|X^{t}-\Bar{X}^)\|_F^2\\
     &+  \frac{48 \eta^2 \mu^2 \sigma^2}{\rho (1-\mu)} +  \frac{6 \eta^2 \mu^2 }{n \rho (1-\mu)}  \sum_{i=1}^{n}{D}^{2}_{t,i} +\frac{48 \eta^2 \mu^2 \zeta^2}{\rho (1-\mu)^2} \\
     \overset{(c)}{\leq} & \Big(\frac{6 \eta^2 \mu^2}{n \rho (1-\mu)} - \frac{6 \eta^2 \mu^2}{n \rho} \Big) \mathbb{E}\| B^t\|_F^2  +\frac{\rho}{8n}\mathbb{E}\|X^{t}-\Bar{X}^t\|_F^2\\
     &+  \frac{\eta^2 \rho \sigma^2(1-\mu)}{8}  +  \frac{\eta^2 \rho   }{8 n} \sum_{i=1}^{n}{D}^{2}_{t,i} +\frac{ \eta^2 \rho \zeta^2}{ 8}   \\
    \end{align*}
    \endgroup
    Note that  $W-I<I$,\hspace{2mm} $I-W<2I$ , \hspace{2mm} $(W-I)\Bar{X}^t=0$ and $(W-I)\Bar{G}^t=0$.
    (a) follows from the fact that $\|a+b\|^2 \leq (1+\alpha) \|a\|^2 +(1+\frac{1}{\alpha})\|b\|^2 \hspace{2mm} \forall \alpha>0$ and let $\alpha=\frac{1-\mu}{\mu}$.
    (b) uses the fact that $\|AB\|_F^2 \leq \sigma_{max}^2(A) \|B\|_F^2$ where $A=2W+I$, $B=(I-W)X^t$ and $\sigma_{max}^2(A)=9$.
    (c) uses the assumption $\frac{\mu}{1-\mu}\leq \frac{\rho}{42}$ and $\eta \leq \frac{\rho}{7L}$. This implies that $\frac{24(9+\eta^2 L^2) \mu^2}{\rho (1-\mu)^2}\leq \frac{\rho}{8}$, $\frac{48 \mu^2}{\rho (1-\mu)^2} \leq \frac{\rho}{8}$ and  $\frac{6 \mu^2 }{\rho (1-\mu)}  \leq \frac{\rho}{8} $
\end{proof}

\section{Main Results}

\begin{theorem}
\label{theorem_1}
(Convergence of \textit{FedNMUT} algorithm) Given Assumptions and let step size $\eta \leq \frac{\rho}{7L}$ and the scaling factor $\frac{\mu}{1-\mu} \leq \frac{\rho}{42}$.
For all $T \geq 1$, we have
\begin{equation}
    \begin{split}
          &\frac{1}{T} \sum_{t=0}^{T-1} \mathbb{E} \| \nabla f(\Bar{x}^t)\|^2 \leq \frac{2}{\eta T}(f(\Bar{x}^0  - f^* ) +  \frac{L \mu^2 \eta}{ n T}  \sum_{t=0}^{T-1} \mathbb{E} \|B^t\|_F^2 +  2 L^2 \eta^2 \sigma^2 \left[ \frac{16}{\rho n} + \frac{1-\mu}{2} +\frac{1}{2 n L \eta}\right]  \\
        &\hspace{4cm} + 2 L^2 \eta^2 \zeta^2 \left[\frac{48 }{\rho^2} + \frac{1}{2} \right]  +  \frac{2 L^2 \eta^2}{T} \sum_{t=0}^{T-1} \sum_{i=1}^{n}{D}^{2}_{t,i} \left[ \frac{16}{n\rho^2} + \frac{1}{2} +\frac{1}{2 n L \eta}  \right] 
    \end{split}
\end{equation}
where $f(\Bar{x}^0)-f^*$ is the sub-optimality gap, $\Bar{x}$ is the average/consensus model parameters.
\end{theorem}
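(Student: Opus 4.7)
The starting point is the descent inequality of Lemma~\ref{lemma2}, which already bundles the stochastic-gradient noise, client-drift, bias-correction, and channel-noise contributions on its right-hand side. I would first move $-\tfrac{\eta}{2}\mathbb{E}\|\nabla f(\bar{x}^t)\|^2$ to the left-hand side, drop the non-positive $-\tfrac{3\eta}{8}\mathbb{E}\|\tfrac{1}{n}\sum_i\nabla f_i(x_i^t)\|^2$ term, sum from $t=0$ to $T-1$, and telescope the function-value differences into $f(\bar{x}^0)-f^\star$. After dividing by $\eta T/2$, the resulting bound on $\tfrac{1}{T}\sum_t\mathbb{E}\|\nabla f(\bar{x}^t)\|^2$ has (i) the sub-optimality term, (ii) the direct $\sigma^2$, $D^2_{t,i}$, and $\mathbb{E}\|B^t\|_F^2$ terms inherited from Lemma~\ref{lemma2}, and (iii) an average consensus error $\tfrac{L^2}{T}\sum_t\tfrac{1}{n}\mathbb{E}\|X^t-\bar{X}^t\|_F^2$ that still needs to be controlled. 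The $\mathbb{E}\|B^t\|_F^2$ term is kept unexpanded; it is exactly the $\tfrac{L\mu^2\eta}{nT}\sum_t\mathbb{E}\|B^t\|_F^2$ that survives in the theorem.

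The central step is bounding the average consensus error. Telescoping Lemma~\ref{lemma3} alone is circular, because its recursion injects a $\tfrac{6\eta^2\mu^2}{n\rho}\mathbb{E}\|B^t\|_F^2$ term whose control (via Lemma~\ref{lemma4}) re-introduces consensus error. I close the loop with the Lyapunov function $\Phi^t := \tfrac{1}{n}\mathbb{E}\|X^t-\bar{X}^t\|_F^2 + \tfrac{6\eta^2\mu^2}{n\rho(1-\mu)}\mathbb{E}\|B^t\|_F^2$ and add Lemma~\ref{lemma3} to Lemma~\ref{lemma4} (the latter's LHS is $\tfrac{6\eta^2\mu^2}{\rho n(1-\mu)}\mathbb{E}\|B^{t+1}\|_F^2$ by design). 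The bias coefficient from Lemma~\ref{lemma3} matches exactly the deficit $\tfrac{6\eta^2\mu^2}{n\rho(1-\mu)}-\tfrac{6\eta^2\mu^2}{n\rho}$ on the right of Lemma~\ref{lemma4}, so the $\mathbb{E}\|B^t\|_F^2$ terms cancel and the combined inequality reduces to $\Phi^{t+1}\le\Phi^t-\tfrac{\rho}{8n}\mathbb{E}\|X^t-\bar{X}^t\|_F^2+R^t$, where $R^t$ collects only $\sigma^2$, $\zeta^2$, and $\sum_i D^2_{t,i}$ contributions of the two lemmas. Assuming common initialization (so $\Phi^0=0$, consistent with $\bar{b}^0=0$ in Lemma~\ref{lemma1}), telescoping yields $\tfrac{1}{T}\sum_t\tfrac{1}{n}\mathbb{E}\|X^t-\bar{X}^t\|_F^2\le\tfrac{8}{\rho T}\sum_t R^t$.

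Substituting this bound into the summed descent inequality, grouping coefficients in front of $\sigma^2$, $\zeta^2$, and $\tfrac{1}{T}\sum_t\sum_i D^2_{t,i}$, and factoring out the common $2L^2\eta^2$ yields precisely the bracketed constants in the theorem. The hardest part of the argument is the Lyapunov construction in the previous paragraph: the weight $\tfrac{6\eta^2\mu^2}{n\rho(1-\mu)}$ is forced (not chosen) by the requirement that $\mathbb{E}\|B^t\|_F^2$ cancel on both sides, and the stepsize/scaling conditions $\eta\le\rho/(7L)$ and $\mu/(1-\mu)\le\rho/42$ are calibrated exactly so that Lemma~\ref{lemma4}'s $\tfrac{\rho}{8n}\mathbb{E}\|X^t-\bar{X}^t\|_F^2$ injection stays strictly inside Lemma~\ref{lemma3}'s $\tfrac{\rho}{4n}$ contraction margin, producing the net $\tfrac{\rho}{8}$ decay of $\Phi^t$. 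Once that cancellation is in place, the rest is arithmetic: combine the Lyapunov-derived consensus bound with the direct Lemma~\ref{lemma2} terms, carry the $\sum_t\mathbb{E}\|B^t\|_F^2$ contribution through unchanged, and read off the stated inequality.
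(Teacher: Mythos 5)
Your proposal is correct and follows essentially the same route as the paper: the paper likewise cancels the $\mathbb{E}\|B^t\|_F^2$ terms by pairing Lemma~\ref{lemma3} with Lemma~\ref{lemma4} at the weight $\tfrac{6\eta^2\mu^2}{n\rho(1-\mu)}$, absorbs the $\tfrac{\rho}{8n}$ injection into the $\tfrac{\rho}{4n}$ contraction, and telescopes. The only (immaterial) difference is organizational: the paper scales the consensus--bias pair by $\tfrac{4L^2\eta}{\rho}$ and folds it together with $\mathbb{E}[f(\bar{x}^t)-f^*]$ into a single Lyapunov function telescoped once, whereas you telescope the descent inequality and the consensus recursion in two separate stages and then substitute.
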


\textbf{Proof:}

Recall Lemma~\ref{lemma3}
\begin{align*}
    \frac{1}{n} \mathbb{E} \|X^{t+1} - \Bar{X}^{t+1}\|_F^2  \leq &  \frac{1-\rho/4}{n} \mathbb{E} \|X^t -\Bar{X}^t\|_F^2 +{4 \eta^2 \sigma^2}+ \frac{12 \eta^2 \zeta^2}{\rho}+\frac{6\eta^2\mu^2}{n\rho} \mathbb{E} \|B^t\|_F^2 \\
     &+ \frac{4\eta^2}{n\rho}  \sum_{i=1}^{n}{D}^{2}_{t,i}. 
\end{align*}

and Lemma~\ref{lemma4}
\begin{align*}
         \frac{6 \eta^2 \mu^2}{n \rho (1-\mu)}\mathbb{E}\|B^{t+1}\|_F^2 &\leq \Big(\frac{6 \eta^2 \mu^2}{n \rho (1-\mu)} - \frac{6 \eta^2 \mu^2}{n \rho} \Big) \mathbb{E}\| B^t\|_F^2  +\frac{\rho}{8n}\mathbb{E}\|X^{t}-\Bar{X}^t\|_F^2\\
     &+  \frac{\eta^2 \rho \sigma^2(1-\mu)}{8}  +  \frac{\eta^2 \rho  }{8 n} \sum_{i=1}^{n}{D}^{2}_{t,i} +\frac{  \eta^2 \rho \zeta^2}{8}. \\
\end{align*}

Combining Lemma~\ref{lemma3} and Lemma~\ref{lemma4} and simplifying, we obtain
\begin{align*}
    \frac{1}{n} \mathbb{E} \|X^{t+1} - \Bar{X}^{t+1}\|_F^2 + \frac{6 \eta^2 \mu^2}{n \rho (1-\mu)}\mathbb{E}\|B^{t+1}\|_F^2 \leq & \frac{1-\rho/4}{n} \mathbb{E} \|X^t -\Bar{X}^t\|_F^2 +{4 \eta^2 \sigma^2}+ \frac{12 \eta^2 \zeta^2}{\rho} \\
    &+\frac{6\eta^2\mu^2}{n\rho} \mathbb{E} \|B^t\|_F^2 + \frac{4\eta^2}{n\rho} \sum_{i=1}^{n}{D}^{2}_{t,i} \\
    &+ \Big(\frac{6 \eta^2 \mu^2}{n \rho (1-\mu)} - \frac{6 \eta^2 \mu^2}{n \rho} \Big) \mathbb{E}\| B^t\|_F^2  +\frac{\rho}{8n}\mathbb{E}\|X^{t}-\Bar{X}^t\|_F^2 \\
    &+  \frac{\eta^2 \rho \sigma^2(1-\mu)}{8} +  \frac{\eta^2 \rho }{8 n}\sum_{i=1}^{n}{D}^{2}_{t,i} +\frac{ \eta^2 \rho \zeta^2}{8}
\end{align*}
Simplifying and multiplying the above equation by $\frac{4 L^2 \eta}{\rho}$ gives 
\begin{align*}
    \frac{4 L^2 \eta}{n \rho} \mathbb{E} \|X^{t+1} - \Bar{X}^{t+1}\|_F^2 + \frac{24 L^2 \eta^3 \mu^2}{n \rho^2 (1-\mu)}\mathbb{E}\|B^{t+1}\|_F^2 
    \end{align*}
    \begin{align*}
              \nonumber
        &\leq \left[ \frac{4 L^2 \eta}{\rho n} - \frac{L^2 \eta}{ n} \right] \mathbb{E} \|X^t -\Bar{X}^t\|_F^2 +\frac{16 L^2 \eta^3 \sigma^2}{\rho n}+ \frac{48 L^2 \eta^3 \zeta^2}{\rho^2} +\frac{24 L^2 \eta^3\mu^2}{n\rho^2} \mathbb{E} \|B^t\|_F^2 + \frac{16 L^2 \eta^3}{n\rho^2} D^{2,t} \\
                  \nonumber
        &\hspace{0cm} + \Big(\frac{24 L^2 \eta^3 \mu^2}{n \rho^2 (1-\mu)} - \frac{24 L^2 \eta^3 \mu^2}{n \rho^2} \Big) \mathbb{E}\| B^t\|_F^2  +\frac{L^2 \eta}{2n}\mathbb{E}\|X^{t}-\Bar{X}^t\|_F^2 +  \frac{L^2 \eta^3 \sigma^2(1-\mu)}{2} +  \frac{L^2 \eta^3 D^{t,2}}{2} +\frac{ L^2 \eta^3 \zeta^2}{2} \\
                  \nonumber
        \\
          &\leq \left[ \frac{4 L^2 \eta}{n \rho} - \frac{L^2 \eta}{2 n} \right] \mathbb{E} \|X^t -\Bar{X}^t\|_F^2  + \frac{24 L^2 \eta^3 \mu^2}{n \rho^2 (1-\mu)} \mathbb{E}\| B^t\|_F^2  + L^2 \eta^3 \sigma^2 \left[ \frac{16}{\rho n} + \frac{1-\mu}{2}\right] +  L^2 \eta^3 \zeta^2 \left[\frac{48 }{\rho^2} + \frac{1}{2} \right] \\
          \nonumber
        &\hspace{2cm} +  L^2 \eta^3 \sum_{i=1}^{n}{D}^{2}_{t,i} \left[ \frac{16}{n\rho^2} + \frac{1}{2}  \right]
        \label{eqn_Eqn8}
        \end{align*}

Let

\begin{equation}
    \label{eq:phi}
    \begin{split}
        \Phi^t = \frac{4 L^2 \eta}{n \rho} \mathbb{E} \|X^{t} - \Bar{X}^{t}\|_F^2 +\frac{24 L^2 \eta^3 \mu^2}{n \rho^2 (1-\mu)} \mathbb{E} \|B^{t}\|_F^2 + \mathbb{E}[f(\Bar{x}^t)-f^*]
    \end{split}
\end{equation} 
\\
\\
\\
Recall now Lemma~\ref{lemma2},
\begin{align*}
    \mathbb{E} f(\Bar{x}^{t+1}) &\leq \mathbb{E} f(\Bar{x}^{t}) + \frac{L \eta^2 \sigma^2}{2n}  -\frac{3\eta}{8} \ \mathbb{E} \|\frac{1}{n} \sum_{i=1}^n  \nabla f_i(x_i^t)\|^2 - \frac{\eta}{2} \ \mathbb{E} \| \nabla f(\Bar{x}^t)\|^2  \\
    &\hspace{2cm}+ \frac{L^2 \eta}{2n} \ \mathbb{E} \|X^t-\Bar{X}^t\|_F^2 + \frac{L \eta^2}{2n}D^{2,t} + \frac{L \mu^2 \eta^2}{2 n} \mathbb{E} \|B^t\|_F^2 
\end{align*}

Combining Lemma~\ref{lemma2} and  Equation \ref{eq:phi}, we have the following

\begin{equation*}
    \begin{split}
        \Phi^{t+1} &\leq \left[ \frac{4 L^2 \eta}{n \rho} - \frac{L^2 \eta}{2 n} \right] \mathbb{E} \|X^t -\Bar{X}^t\|_F^2  + \frac{24 L^2 \eta^3 \mu^2}{n \rho^2 (1-\mu)} \mathbb{E}\| B^t\|_F^2 \\
        &\hspace{2cm} + L^2 \eta^3 \sigma^2 \left[ \frac{16}{\rho n} + \frac{1-\mu}{2}\right] +  L^2 \eta^3 \zeta^2 \left[\frac{48 }{\rho^2} + \frac{1}{2} \right]  +  L^2 \eta^3 D^{t,2} \left[ \frac{16}{n\rho^2} + \frac{1}{2}  \right] \\
        &\hspace{2cm}+ \frac{L \eta^2 \sigma^2}{2n}  -\frac{3\eta}{8} \ \mathbb{E} \|\frac{1}{n} \sum_{i=1}^n  \nabla f_i(x_i^t)\|^2 - \frac{\eta}{2} \ \mathbb{E} \| \nabla f(\Bar{x}^t)\|^2  \\
    &\hspace{2cm}+ \frac{L^2 \eta}{2n} \ \mathbb{E} \|X^t-\Bar{X}^t\|_F^2 + \frac{L \eta^2}{2n}D^{2,t} + \frac{L \mu^2 \eta^2}{2 n} \mathbb{E} \|B^t\|_F^2 
    \\
    &\le \Phi^{t} +  L^2 \eta^3 \sigma^2 \left[ \frac{16}{\rho n} + \frac{1-\mu}{2} +\frac{1}{2 n L \eta}\right] +  L^2 \eta^3 \zeta^2 \left[\frac{48 }{\rho^2} + \frac{1}{2} \right]  \\
        &\hspace{1cm} +  L^2 \eta^3 D^{t,2} \left[ \frac{16}{n\rho^2} + \frac{1}{2} +\frac{1}{2 n L \eta}  \right] + \frac{L \mu^2 \eta^2}{2 n} \mathbb{E} \|B^t\|_F^2 \\
        &\hspace{1cm} -\frac{3\eta}{8} \ \mathbb{E} \|\frac{1}{n} \sum_{i=1}^n  \nabla f_i(x_i^t)\|^2 - \frac{\eta}{2} \ \mathbb{E} \| \nabla f(\Bar{x}^t)\|^2 
    \\
        \implies   &\frac{\eta}{2} \mathbb{E} \| \nabla f(\Bar{x}^t)\|^2 \leq (\Phi^{t}  - \Phi^{t+1} ) +  \frac{L \mu^2 \eta^2}{2 n} \mathbb{E} \|B^t\|_F^2 +  L^2 \eta^3 \sigma^2 \left[ \frac{16}{\rho n} + \frac{1-\mu}{2} +\frac{1}{2 n L \eta}\right] +  \\
        &\hspace{0.5cm}L^2 \eta^3 \zeta^2 \left[\frac{48 }{\rho^2} + \frac{1}{2} \right]  +  L^2 \eta^3 D^{t,2} \left[ \frac{16}{n\rho^2} + \frac{1}{2} +\frac{1}{2 n L \eta}  \right]  -\frac{3\eta}{8} \ \mathbb{E} \|\frac{1}{n} \sum_{i=1}^n  \nabla f_i(x_i^t)\|^2
    \end{split}
\end{equation*} 

Telescoping over iterations $t$ from $0$ to $T$,

\begin{align}
\label{eq:final}
          \nonumber
          &\frac{1}{T} \sum_{t=0}^{T-1} \mathbb{E} \| \nabla f(\Bar{x}^t)\|^2 \leq \frac{2}{\eta T}(f(\Bar{x}^0  - f^* ) +  \frac{L \mu^2 \eta}{ n T}  \sum_{t=0}^{T-1} \mathbb{E} \|B^t\|_F^2 +  2 L^2 \eta^2 \sigma^2 \left[ \frac{16}{\rho n} + \frac{1-\mu}{2} +\frac{1}{2 n L \eta}\right]  \\
        &\hspace{4cm} + 2 L^2 \eta^2 \zeta^2 \left[\frac{48 }{\rho^2} + \frac{1}{2} \right]  +  \frac{2 L^2 \eta^2}{T} \sum_{t=0}^{T-1} \sum_{i=1}^{n}{D}^{2}_{t,i} \left[ \frac{16}{n\rho^2} + \frac{1}{2} +\frac{1}{2 n L \eta}  \right] 
\end{align}


This concludes the proof of the Theorem~\ref{theorem_1}.

\subsection{Proof of Corollary}

Suppose that the step size $\eta=\mathcal{O}\Big(\sqrt{\frac{n}{T}}\Big)$, then for a sufficiently large $T$, we have 
$\Bar{B}^2= \frac{1}{T} \sum_{t = 0}^{T-1}\mathbb{E} \|B^t\|_F^2$ and $\Bar{D}^2= \frac{1}{nT} \sum_{t,i = 1,1}^{T,n}D^2_{t,i}$
\begin{equation*}
\begin{split}
    (i)  \hspace{2mm}& \eta \leq \min\Big\{\frac{1}{4L}, \frac{\rho}{7L}\Big\}\\
    (ii) \hspace{2mm}& \frac{\mu}{1-\mu} \leq \frac{\rho}{42}\\
    (iii) \hspace{2mm}& \frac{6 \mu^2 }{\rho (1-\mu)}  \leq \frac{\rho}{8} 
\end{split}
\end{equation*}
If the step size $\eta$ is $\mathcal{O}(\sqrt{\frac{n}{T}})$, then we have the following order of convergence for each term in Theorem~\ref{thm-all}:

\[\frac{2}{\eta T}(f(\Bar{x}^0  - f^* )=\mathcal{O}\left[\frac{1}{\sqrt{nT}}\right]=\mathcal{O}\left[\frac{1}{\sqrt{T}}\right]\]
\begin{align*}
    \frac{L \mu^2 \eta}{ n T}  \sum_{t=0}^{T-1} \mathbb{E} \|B^t\|_F^2 &= \mathcal{O}\left[\frac{1}{ \sqrt{n T}}\right]\Bar{B}^2= \mathcal{O}\left[\frac{1}{ \sqrt{ T}}\Bar{B}^2\right]
    \\
    2 L^2 \eta^2 \sigma^2 \left[ \frac{16}{\rho n} + \frac{1-\mu}{2} +\frac{1}{2 n L \eta}\right] &=   \frac{2 L^2 \eta^2 \sigma^2}{\rho n} +  L^2 \eta^2 \sigma^2(1-\mu) +\frac{L \eta \sigma^2}{n} \\
    &= \mathcal{O}\left[\frac{1}{\sqrt{ T}} \sigma^2\right]
    \\
    2 L^2 \eta^2 \zeta^2 \left[\frac{48 }{\rho^2} + \frac{1}{2} \right] &=   \frac{96 L^2 \eta^2 \zeta^2 }{\rho^2} + L^2 \eta^2 \zeta^2 \\
    &= \mathcal{O}\left[\frac{1}{T}  \zeta^2\right]
    \\
    \frac{2 L^2 \eta^2}{T} \sum_{t=0}^{T-1} \sum_{i=1}^{n}{D}^{2}_{t,i} \left[ \frac{16}{n\rho^2} + \frac{1}{2} +\frac{1}{2 n L \eta}    \right] 
     &= \frac{32 L^2 \eta^2 \Bar{D}^2 n T}{n\rho^2 T} +  \frac{L^2 \eta^2 \Bar{D}^2 n T}{T} +\frac{L \eta \Bar{D}^2 n T}{ n T }  \\
    &= \mathcal{O}\left[\frac{1}{\sqrt{T}} \Bar{D}^2 \right]
\end{align*}

The overall convergence rate is
\begin{equation*}
     \frac{1}{T} \sum_{t=0}^{T-1} \mathbb{E} \| \nabla f(\Bar{x}^t)\|^2 \leq \mathcal{O}\left[\frac{1}{ \sqrt{ T}}\Bar{B}^2+\frac{1}{\sqrt{ T}} \sigma^2+\frac{1}{T}  \zeta^2+\frac{1}{\sqrt{T}} \Bar{D}^2 \right], 
\end{equation*}

Therefore, at large $T$, the convergence rate of \textit{FedNMUT} is\textbf{ $\mathcal{O}(\frac{1}{\sqrt{T}})$}.

\end{document}